\theoremstyle{plain}
\newtheorem{theorem}{Theorem}
\newtheorem{definition}{Definition}
\newtheorem{proposition}{Proposition}
\newtheorem{remark}{Remark}
\Crefname{section}{Section}{Sections}
\Crefname{table}{Table}{Tables}
\begin{document}

\title{Generalist: Decoupling Natural and Robust Generalization}

\newcommand*{\affaddr}[1]{#1} %
\newcommand*{\affmark}[1][]{\textsuperscript{#1}}

\author{%
Hongjun Wang\affmark[1]\footnotemark[1]\quad Yisen Wang\affmark[1,2]\footnotemark[2] \\
\affaddr{\affmark[1] National Key Lab of General Artificial Intelligence \\ School of Intelligence Science and Technology, Peking University}\\
\affaddr{\affmark[2] Institute for Artificial Intelligence, Peking University}
}

\maketitle

\begin{abstract}
Deep neural networks obtained by standard training have been constantly plagued by adversarial examples. Although adversarial training demonstrates its capability to defend against adversarial examples, unfortunately, it leads to an inevitable drop in the natural generalization. To address the issue, we decouple the natural generalization and the robust generalization from joint training and formulate different training strategies for each one. Specifically, instead of minimizing a global loss on the expectation over these two generalization errors, we propose a bi-expert framework called \emph{Generalist} where we simultaneously train base learners with task-aware strategies so that they can specialize in their own fields. The parameters of base learners are collected and combined to form a global learner at intervals during the training process. The global learner is then distributed to the base learners as initialized parameters for continued training. Theoretically, we prove that the risks of Generalist will get lower once the base learners are well trained. Extensive experiments verify the applicability of Generalist to achieve high accuracy on natural examples while maintaining considerable robustness to adversarial ones. Code is available at \url{https://github.com/PKU-ML/Generalist}.
\end{abstract}

\renewcommand{\thefootnote}{\fnsymbol{footnote}} 
\footnotetext[1]{Work was done as an internship at Peking University. Now, he is a Ph.D. student at the University of Hong Kong.} 
\footnotetext[2]{Corresponding Author: Yisen Wang (yisen.wang@pku.edu.cn)} 

\section{Introduction}
\label{sec:intro}

Modern deep learning techniques have achieved remarkable success in many fields, including computer vision \cite{DBLP:conf/nips/KrizhevskySH12,DBLP:conf/cvpr/HeZRS16}, natural language processing \cite{DBLP:conf/nips/VaswaniSPUJGKP17,DBLP:conf/naacl/DevlinCLT19}, and speech recognition \cite{DBLP:conf/interspeech/SakSRB15,wang2017residual}. 
Yet, deep neural networks (DNNs) suffer a catastrophic performance degradation by human imperceptible adversarial perturbations where wrong predictions are made with extremely high confidence \cite{DBLP:journals/corr/SzegedyZSBEGF13,DBLP:journals/corr/GoodfellowSS14,wang2020transferable}.
The vulnerability of DNNs has led to the proposal of various defense approaches \cite{DBLP:conf/ndss/Xu0Q18,wang2020hamiltonian,DBLP:conf/iclr/QinFSRCH20,DBLP:conf/sp/PapernotM0JS16,bai2019hilbert} for protecting DNNs from adversarial attacks. 
One of those representative techniques is adversarial training (AT) \cite{DBLP:conf/iclr/MadryMSTV18,wang2019dynamic,wang2020improving,mo2022adversarial}, which dynamically injects perturbed examples that deceive the current model but preserve the right label into the training set.
Adversarial training has been demonstrated to be the most effective method to improve adversarially robust generalization \cite{athalye2018obfuscated,wu2020adversarial}.

\begin{figure}[!t]
    \centering
    \includegraphics[width=0.4\textwidth]{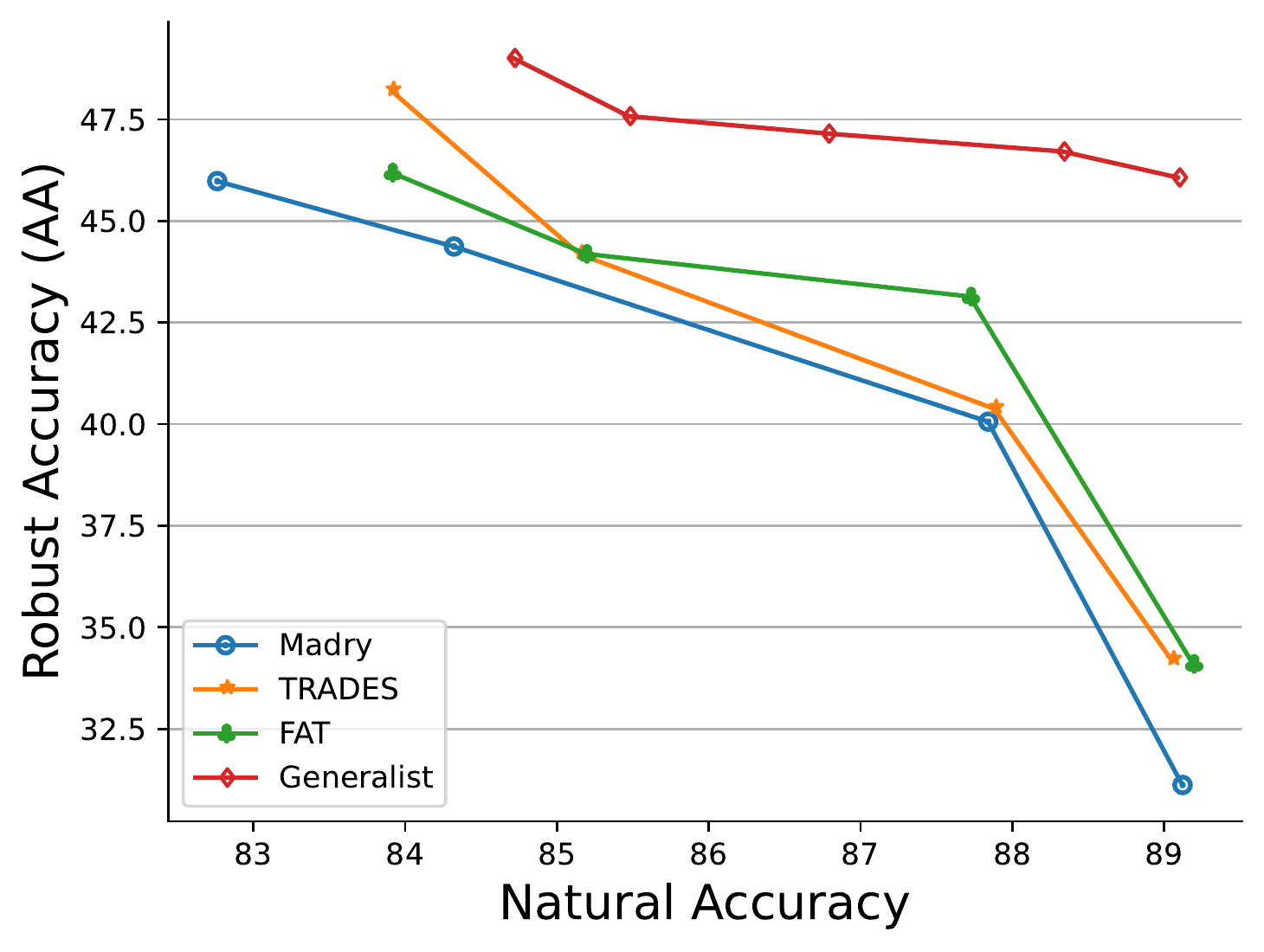}
    \caption{Comparison with other advanced adversarial training methods. Both clean accuracy and robust accuracy (against AutoAttack \cite{DBLP:conf/icml/Croce020a}) are given for a handy reference. It is noted that current adversarial training methods achieve high clean accuracy by greatly sacrificing robustness. That means it is hard to obtain sufficient robustness but maintain high clean accuracy in the joint training framework. Our Generalist attains excellent clean accuracy while staying competitively robust. The improvement of Generalist is notable since we only use the naive cross-entropy loss with negligible computational overhead and even  without increasing the model size.}\label{fig:first}
\end{figure}

Despite these successes, such attempts of adversarial training have found a tradeoff between natural and robust accuracy, \textit{i.e.}, there exists an undesirable increase in the error on unperturbed images when the error on the worst-case perturbed images decreases, as illustrated in Figure \ref{fig:first}.
Prior works \cite{DBLP:conf/iclr/TsiprasSETM19,DBLP:conf/icml/ZhangYJXGJ19} even argue that natural and robust accuracy are fundamentally at odds, which indicates that a robust classifier can be achieved only when compromising the natural generalization.
However, the following works found that the tradeoff may be settled in a roundabout way, such as incorporating additional labeled/unlabeled data \cite{DBLP:conf/nips/AlayracUHFSK19,DBLP:conf/nips/NajafiMKM19,DBLP:conf/nips/CarmonRSDL19,DBLP:conf/icml/RaghunathanXYDL20} or relaxing the magnitude of perturbations to generate suitable adversarial examples for better optimization \cite{DBLP:conf/icml/ZhangXH0CSK20,DBLP:conf/cvpr/LeeLY20}. These works all focus on the data used for training while we propose to tackle the tradeoff problem from the perspective of the \textbf{training paradigm} in this paper. 

Inspired by the spirit of the divide-and-conquer method, we decouple the objective function of adversarial training into two sub-tasks: one is used for natural example classification while the other one is used for adversarial example classification. Specifically, for each sub-task, we train a base learner on natural/adversarial datasets with the task-specific configuration while sharing the same model architecture. The parameters of base learners are collected and combined to form a global learner
at intervals during the training process, which is then distributed to base learners as initialized parameters for continued training. We name the framework as \emph{Generalist} whose proof-of-concept pipeline is shown in Figure \ref{fig:pipeline}. Different from the traditional joint training framework for natural and robust generalization, our proposed Generalist fully leverages task-specific information to individually train the base learners, which makes each sub-task to be solved better. Theoretically, we show that if the base learners are well trained, the final global learner is guaranteed to have a lower risk. Our proposed Generalist is the first to effectively address the tradeoff between natural and robust generalization by utilizing task-aware training strategies to achieve high clean accuracy in the natural setting, while also maintaining considerable robustness to the adversarial setting (as shown in Figure \ref{fig:first}). 

In summary, the main contributions are as follows:
\begin{itemize}
  \item For the tradeoff between natural and robust generalization, previous methods have struggled to find a sweet point to meet both goals in the joint training framework. Here, we propose a novel Generalist paradigm, which constructs multiple task-aware base learners to respectively achieve the generalization goal on natural and adversarial counterparts separately.
  \item For each task, rather than being constricted in a stiff manner, every detail of the training strategies (\textit{e.g.}, optimization scheme) can be totally customized, thus each base learner can better explore the optimal trajectory in its field while the global learner can fully leverage the merits of all base learners.
  \item We conduct extensive experiments in common settings against a wide range of adversarial attacks to demonstrate the effectiveness of our approach. Results show that our Generalist paradigm greatly improves both clean and robust accuracy on benchmark datasets compared to relevant techniques.
\end{itemize}

\begin{figure}[!t]
    \centering
    \includegraphics[width=0.4\textwidth]{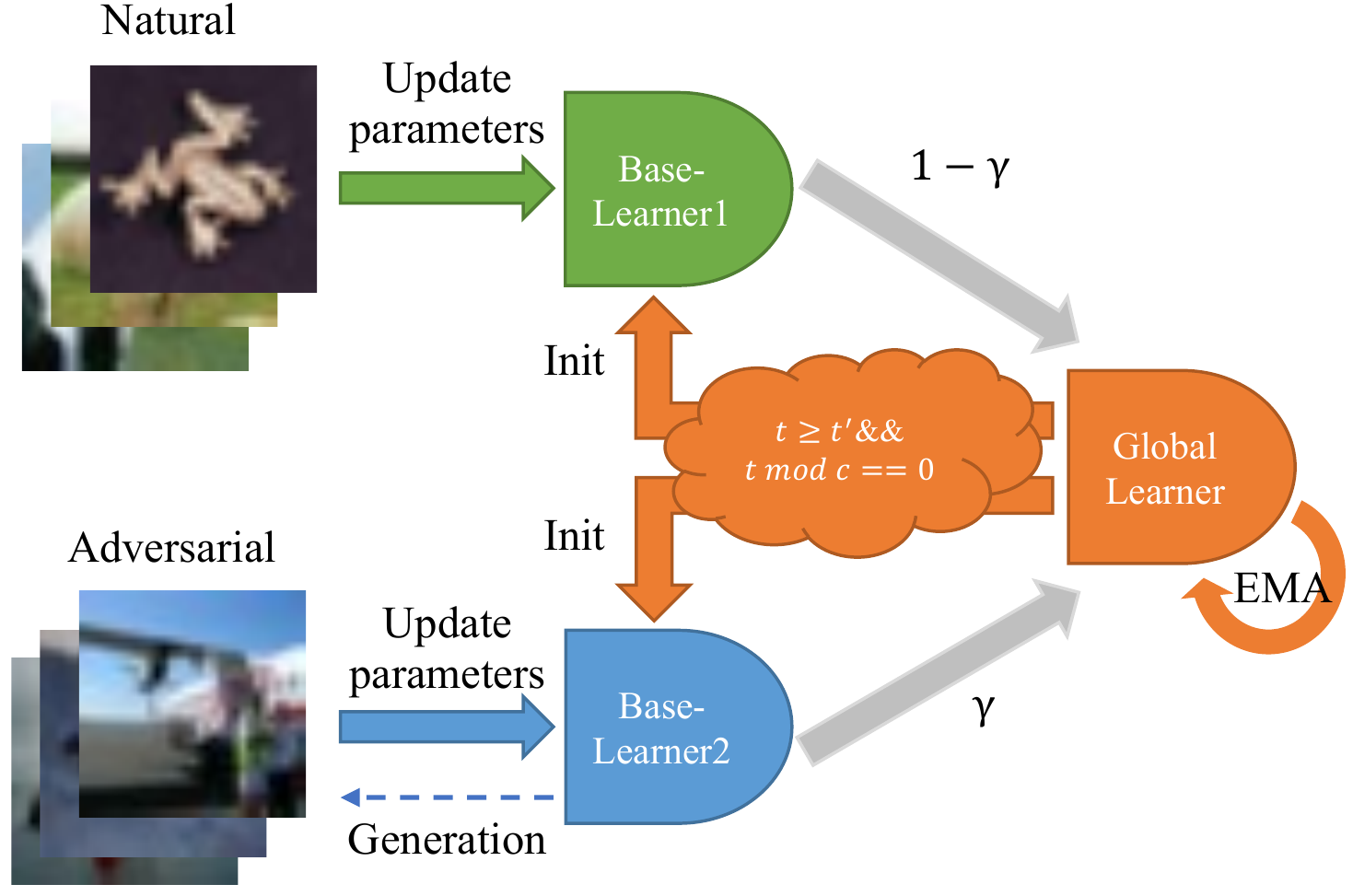}
    \caption{A pipeline for the proposed Generalist. It consists of two base learners separately trained within their respective fields and a global learner aggregates the parameters of base learners through the training process. The global learner assigns its accumulated knowledge to each base learner with a fixed frequency, based on which the base learner continues learning.} \label{fig:pipeline}
\end{figure}

\section{Preliminaries and Related Work}
\label{sec:related}
In this section, we briefly introduce some relevant background knowledge and terminology about adversarial training and meta-learning.
\label{sec:preliminaries}

\textbf{Notations.} Consider an image classification task with input space $\mathcal{X}$ and output space $\mathcal{Y}$. Let $x \in\mathcal{X} \subseteq \mathbb{R}^{d}$ denote a natural image and $y\in\mathcal{Y}=\{1,2, \ldots, K\}$ denote the corresponding ground-truth label. The natural and adversarial datasets $\mathcal{X}\times\mathcal{Y}=\left\{\left(x_i, y_i\right)\right\}_{i=1}^{n}$ and $\mathcal{X}^{\prime}\times\mathcal{Y}=\left\{\left(x^{\prime}_i, y_i\right)\right\}_{i=1}^{n}$ are sampled from a distribution $\mathcal{D}_1$ and $\mathcal{D}_2$, respectively. We denote a DNN model as $f_{\theta}: \mathcal{X} \rightarrow \mathbb{R}^{K}$ whose parameters are $\boldsymbol\theta \in \Theta$, which should classify any input image into one of $K$ classes. The objective functions $\ell_1$ and $\ell_2$ for the natural and adversarial setting can be defined as: $\ell_{1}\overset{def}{=}\mathcal{D}_{1}\times\Theta\rightarrow [0, \infty)$ and $\ell_{2}\overset{def}{=}\mathcal{D}_{2}\times\Theta\rightarrow [0, \infty)$, which are usually positive, bounded, and upper-semi continuous \cite{DBLP:journals/mor/BlanchetM19,Villani2003TopicsIO,DBLP:conf/colt/BartlettM01}. 

\subsection{Standard Adversarial Training}
The goal of the adversary is to generate a malignant example $x^{\prime}$ by adding an imperceptible perturbation $\varepsilon \in \mathbb{R}^{d}$ to $x$. And the generated adversarial example $x^{\prime}$ should be in the vicinity of $x$ so that it looks visually similar to the original one.  
This neighbor region $\mathbb{B}_{\varepsilon}(x)$ anchored at $x$ with apothem $\varepsilon$ can be defined as $\mathbb{B}_{\varepsilon}(x)=\left\{(x^{\prime},y) \in \mathcal{D}_2 \mid \left\|x-x^{\prime}\right\|_{\infty} \leq \varepsilon\right\}$.
For adversarial training, it first generates adversarial examples and then updates the parameters over these samples. The iteration process of adversarial training can be summed up as:
\begin{equation}
\small
\label{eqn:AT_basic}
\left\{\begin{array}{l}
{x^{\prime}}^{(t+1)}=\Pi_{\mathbb{B}\left(x, \epsilon\right)}\left({x^{\prime}}^{(t)}+\alpha \operatorname{sign}\left(\nabla_{x^{\prime}} \ell_2\left({x^{\prime}}^{(t)}, y; \boldsymbol\theta^t\right)\right)\right) \\
\boldsymbol\theta^{(t+1)}=\boldsymbol\theta^{(t)} - \tau\nabla_{\boldsymbol\theta} \mathbb{E}[\ell_1(x, y; \boldsymbol\theta^t)+\beta\mathcal{R}(x^{\prime}, x, y; \boldsymbol\theta^t)],
\end{array}\right.
\end{equation}
where $\Pi_{\mathbb{B}\left(x, \epsilon\right)}$ is the projection operator, $\alpha$ is the step size, $\tau$ is the learning rate, and $\mathcal{R}(\cdot)$ is the loss difference of $\ell_2(x^{\prime}, y; \boldsymbol\theta^t) - \ell_1(x, y; \boldsymbol\theta^t)$. The tradeoff factor $\beta$ balances the importance of natural and robust errors. 
Various adversarial training methods can be derived from Eq. \ref{eqn:AT_basic}. For instance, when $\beta=1$, it is equivalent to the vanilla PGD training \cite{DBLP:conf/iclr/MadryMSTV18}, and when $\beta=1/2$, it is transformed into the half-half loss in \cite{DBLP:journals/corr/GoodfellowSS14}. The formulation degenerates to standard natural training as $\beta=0$. Besides, we can get the formulation in TRADES \cite{DBLP:conf/icml/ZhangYJXGJ19} when replacing $\mathcal{R}(\cdot)$ with the KL-divergence.

\subsection{Multi-Task Learning and Meta-Initialization}

\textbf{Multi-Task Learning.} Multi-Task Learning (MTL) is to improve performance across tasks through joint training of different models \cite{DBLP:conf/nips/BilenV16,DBLP:conf/cvpr/LuKZCJF17,DBLP:conf/iclr/YangH17}. Consider a set of assignments containing data distribution and loss function defined as $\mathcal{A}=\{\mathcal{D}, \ell\}$ with corresponding models $\left\{\mathcal{M}_{a}\right\}_{a=1}^{|\mathcal{A}|}$ parameterized by trainable tensors $\boldsymbol\theta_{\mathcal{M}_{a}}$. 
In MTL, these sets have non-trivial pairwise intersections, and are trained in a joint model to find optimal parameters $\boldsymbol\theta_{\mathcal{M}_{a}}^{\star}$ for each task:
\begin{equation}
\label{eqn:MTL}
\bigcup_{a=1}^{|\mathcal{A}|} \boldsymbol\theta_{\mathcal{M}_{a}}^{\star}=\underset{\cup_{a=1}^{|\mathcal{A}|} \boldsymbol\theta_{\mathcal{M}_{a}}}{\operatorname{argmin}} \mathbb{E}_{\mathcal{A}}\mathbb{E}_{\mathcal{D}}\ \ell_{a}\left(\mathcal{D}_a; \boldsymbol\theta_{\mathcal{M}_{a}}\right),
\end{equation}
where $\ell_a(\mathcal{D}_a; \boldsymbol\theta_{\mathcal{M}_{a}})$ measures the performance of a model trained using $\boldsymbol\theta_{\mathcal{M}_{a}}$ on dataset $\mathcal{D}_a$. Our approach Generalist is directly related to MTL at first glance because both of them tend to learn a specific predictive model for different sources. However, Generalist differs significantly from MTL, \textit{i.e.}, multiple tasks are still learned \emph{jointly} under a unified form in MTL while each assignment can be optimized by heterogeneous strategies in Generalist. 

\textbf{Meta-Learning.} Meta-learning is to train a model that can quickly adapt to a new task. Suppose $\mathcal{A}$ is divided into non-overlapping splits $\mathcal{V}$ and $\mathcal{W}$, the model is first trained on the training sets and then guided by a small validation set on a set of tasks to make the trained model can be well adapted to new tasks:
\begin{equation}
\label{eqn:meta-learning}
\small
\boldsymbol\theta^{\star}=\underset{\boldsymbol\theta}{\operatorname{argmin}} \mathbb{E}_{\mathcal{V}}\mathbb{E}_{\mathcal{D}_{\mathcal{V}}}\ \ell_{\mathcal{V}}\left(\mathcal{D}_{\mathcal{V}}; \underset{\boldsymbol\theta}{\operatorname{argmin}} \mathbb{E}_{\mathcal{W}}\mathbb{E}_{\mathcal{D}_{\mathcal{W}}}\ \ell_{\mathcal{W}}\left(\mathcal{D}_{\mathcal{W}}; \boldsymbol\theta\right)\right),
\end{equation}
meta-learning \cite{DBLP:conf/icml/FinnAL17,DBLP:journals/corr/abs-1803-02999} is often designed to generalize across unseen tasks, whereas the goal of MTL is to tackle a series of known tasks. 
Nonetheless, our approach Generalist uses the technique of meta-learning to set good initializations for base learners to transfer knowledge between tasks.

\section{The Proposed New Framework: Generalist}
\label{sec:method}
Similar to a physical-world Generalist who has broad knowledge across many topics and expertise in a few, our proposed Generalist can deal with both natural and adversarial samples during test time. It consists of different base learners gradually specialized in their own disjointed fields. Over time they stretch their expertise to encompass knowledge respectively. From a starting point, Generalist takes his suitcase packed full of wide-ranging experience with him wherever it goes (\textit{i.e.}, the global learner spreads accumulated knowledge and each expert learns from the re-initialization after a certain epoch).

\subsection{Overview}
The overall procedure of our proposed algorithm is shown in Algorithm \ref{alg:Generalist}, which mainly comprises two steps: optimizing parameters of the base learner $\boldsymbol\theta_a$ in its assigned data distribution $\mathcal{D}_a$ and distributing parameters of the global learner $\boldsymbol\theta_g$ to all base learners. Base learners and the global learner share the same architecture, \textit{i.e.}, $\mathcal{M}_{1}=\mathcal{M}_{2}=\cdots=\mathcal{M}_{|\mathcal{A}|}$. Since we only focus on recognizing natural examples and adversarial ones in our setting, the total number of tasks $\mathcal{W}$ is set to two.

\begin{algorithm}[!t]
\footnotesize
   \caption{\footnotesize{Generalist: Leverage the learning trajectory with respect to task-aware base learners}}
   \label{alg:Generalist}
\algsetup{linenosize=\tiny}
\begin{algorithmic}
   \STATE {\bfseries Input:} A DNN classifier $f(\cdot)$ with initial learnable parameters $\boldsymbol\theta_g$ for the global learner and $\boldsymbol\theta_n, \boldsymbol\theta_r$ for each base learner with objective function $\ell_1, \ell_2$; number of iterations $T$; number of adversarial attack steps $K$; magnitude of perturbation $\varepsilon$; step size $\kappa$; learning rate $\tau_n, \tau_r$; exponential decay rates for ensembling $\alpha^{\prime}=0.999$; mixing ratio $\gamma$; starting point and frequency of communication $t^{\prime}, c$.
   \STATE Initialize $\boldsymbol\theta_g, \boldsymbol\theta_n, \boldsymbol\theta_r$ in $\Theta$ space.
   \FOR {t $ \leftarrow 1, 2, \cdots , T$}
    \STATE Sample a minibatch $(x, y)$ from data distribution $\mathcal{D}_1$
    \STATE \emph{/* Parallel-1: Update parameters of base learner-1 over $\mathcal{D}_1$*/}
    \STATE (Optional) Performing model ensembling, data augmentation or label smoothing, etc.
    \STATE $\boldsymbol\theta_n \leftarrow \mathcal{Z}_{n}\left[\mathbb{E}_{(x,y)}(\nabla_{\boldsymbol\theta} \ell_1(x,y; \boldsymbol\theta_{n})),\tau_{n}\right]$
    \STATE \emph{/* Parallel-2: Update parameters of base learner-2 over $\mathcal{D}_2$*/}
    \STATE $x^{\prime}_{0} \leftarrow x+\varepsilon$, $\varepsilon\sim \operatorname{Uniform}(-\varepsilon,\varepsilon)$.
    \FOR {k $ \leftarrow 1, 2, \cdots , K$}
      \STATE $x^{\prime}_{k} \leftarrow \Pi_{x^{\prime}_{k} \in \mathbb{B}_{\varepsilon}(x)}\left(\kappa \operatorname{sign}\left(x^{\prime}_{k-1}+\nabla_{x^{\prime}_{k-1}} \ell_2(x^{\prime}_{k-1}, y; \boldsymbol\theta_r)\right)\right)$
    \ENDFOR
    \STATE (Optional) Performing model ensembling, data augmentation or label smoothing, etc.
    \STATE $\boldsymbol\theta_r \leftarrow \mathcal{Z}_{r}\left[\mathbb{E}_{(x^{\prime},y)}(\nabla_{\boldsymbol\theta} \ell_2(x^{\prime}_K,y; \boldsymbol\theta_{r})),\tau_{r}\right]$
    \STATE \emph{/* For the global learner*/}
    \STATE $\boldsymbol\theta_g \leftarrow \alpha^{\prime}\boldsymbol\theta_g + (1-\alpha^{\prime})(\gamma\boldsymbol\theta_r + (1-\gamma)\boldsymbol\theta_n)$
    \IF {$t\geq t^{\prime}$ and $t\ \operatorname{mod} c==0$}
      \STATE $\boldsymbol\theta_r, \boldsymbol\theta_n \leftarrow \boldsymbol\theta_g$
    \ENDIF
   \ENDFOR
   \STATE \textbf{Return} Parameters of the global learner $\boldsymbol\theta_g$
\end{algorithmic}
\end{algorithm}

\subsection{Task-aware Base Learners}
Given a global data distribution $\mathcal{D}$ for the tradeoff problem, as denoted in Section \ref{sec:preliminaries}, $\mathcal{D}_1, \mathcal{D}_2$ are subject to the distribution of training data $\mathcal{D}_{\mathcal{W}}$. And natural images $(x,y)\sim\mathcal{D}_{1}$ while adversarial examples $(x^{\prime},y)\sim\mathcal{D}_{2}$ generated by Eq. \ref{eqn:AT_basic}. So the training process of base learners is to solve the inner minimization of Eq. \ref{eqn:meta-learning} over different distributions in a distributed manner:
\begin{equation}
\label{eqn:base learner}
\left\{\boldsymbol\theta_n^{\star},\boldsymbol\theta_r^{\star}\right\}=\underset{\bigcup_{\mathcal{W}=1}^{2}\boldsymbol\theta_{\mathcal{W}}}{\operatorname{argmin}}\mathbb{E}_{\mathcal{D}_{\mathcal{W}}}\ \ell_{\mathcal{W}}\left(\mathcal{D}_{\mathcal{W}}; \boldsymbol\theta_{\mathcal{W}}\right).
\end{equation}
Specifically, during the process, base learners $f_{\boldsymbol\theta_n}$ and $f_{\boldsymbol\theta_r}$ are assigned different subproblems that only requires accessing their own data distribution, respectively.
Note that two base learners work in a complementary manner, meaning the update of parameters is independent among base learners and the global learner always collects parameters of both base learners. So the subproblem for each base learner is defined as:
\begin{equation}
\label{eqn:subproblem}
\boldsymbol\theta_{\mathcal{W}}^{\star}=\underset{\boldsymbol\theta}{\operatorname{argmin}}{\mathcal{Z}_{\mathcal{W}}^{T}\left[\mathbb{E}_{\mathcal{W}}(\nabla_{\boldsymbol\theta} \ell_{\mathcal{W}}(\mathcal{D}_{\mathcal{W}}; \boldsymbol\theta_{\mathcal{W}})),\tau_{\mathcal{W}}\right]},
\end{equation}
where the task-aware optimizer $\mathcal{Z}_{\mathcal{W}}^{T}(\cdot,\cdot)$ search the optimal parameter states $\boldsymbol\theta_{\mathcal{W}}^{\star}$ over the subproblem $\mathcal{W}$ in $T$ rounds. Loss functions can also be task-specific and applied to each base learner separately. It is natural to consider minimizing the 0-1 loss in the natural and robust errors, however, solving the optimization problem is NP-hard thus computationally intractable. In practice, we select cross-entropy as the surrogate loss for both $\ell_1$ and $\ell_2$ since it is simple but good enough.

\subsection{Initialization from the Global Learner}
\label{sec:global}
During the initial training periods, base learners are less instrumental since they are not adequately learned. Directly initializing parameters of base learners may mislead the training procedure and further accumulate bias when mixing them. Therefore, we set aside $t^{\prime}$ epochs from the beginning for fully training base learners and just aggregates states on the searching trajectory of base learners through optimization by exponential moving average (EMA), computed as: $\boldsymbol\theta_g \leftarrow \alpha^{\prime}\boldsymbol\theta_g + (1-\alpha^{\prime})(\gamma\boldsymbol\theta_r + (1-\gamma)\boldsymbol\theta_t)$, where $\alpha^{\prime}$ is the exponential decay rates for EMA and $\gamma$ is the mixing ratio for base learners. They then learn an initialization from parameters of the global learner every $c$ epochs when each base learner is well trained in its field. Thus, the optimization of each base learner for every interlude can be expressed in Eq. \ref{eqn:opt_interlude}:
\begin{equation}
\label{eqn:opt_interlude}
\boldsymbol\theta_{\mathcal{W}}^{\star}=\underset{\boldsymbol\theta}{\operatorname{argmin}}{\mathcal{Z}_{\mathcal{W}}^{c}\left[\mathbb{E}_{\mathcal{W}}(\nabla_{\boldsymbol\theta} \ell_{\mathcal{W}}(\mathcal{D}_{\mathcal{W}}; \boldsymbol\theta_{g})),\tau_{\mathcal{W}}\right]}.
\end{equation}
Note that $\boldsymbol\theta_g$ contains both $\boldsymbol\theta_n$ and $\boldsymbol\theta_r$, meaning there always exists a term updated by gradient information of distribution different from the current subproblem. This mechanism enables fast learning within a given assignment and improves generalization, and the acceleration is applicable to the given assignment for its corresponding base learner only (proof in Appendix~\ref{apd:b1}).

With all discussed above, the learning progress of Generalist can be constructed by decending the gradient of $\boldsymbol\theta_r,\boldsymbol\theta_n$ and mixing both of them. The calculating steps in Algorithm \ref{alg:Generalist} can be summarized in Eq. \ref{eqn:overall}.
\begin{equation}
\label{eqn:overall}
\left\{\begin{array}{l}
\boldsymbol\theta_{n}^{t}=\mathcal{Z}_{n}\left[\mathbb{E}_{(x, y)\sim\mathcal{D}_1}(\nabla_{\boldsymbol\theta_n} \ell_1(x, y; \boldsymbol\theta_{n}^{t-1})),\tau_1\right] \\
\boldsymbol\theta_{r}^{t}=\mathcal{Z}_{r}\left[(\mathbb{E}_{(x^{\prime}, y)\sim\mathcal{D}_2}\nabla_{\boldsymbol\theta_r} \ell_2(x^{\prime}, y; \boldsymbol\theta_{r}^{t-1})),\tau_2\right] \\
\boldsymbol\theta_{g}^{t+1}=\alpha^{\prime}\boldsymbol\theta_g^{t-1} + (1-\alpha^{\prime})[\gamma\boldsymbol\theta_r^{t} + (1-\gamma)\boldsymbol\theta_n^{t}] \\
\boldsymbol\theta_{n}^{t}=\mathcal{B}(t, t^{\prime}, c)\boldsymbol\theta_{g}^{t+1}+(1-\mathcal{B}(t, t^{\prime}, c))\boldsymbol\theta_{n}^{t} \\
\boldsymbol\theta_{r}^{t}=\mathcal{B}(t, t^{\prime}, c)\boldsymbol\theta_{g}^{t+1}+(1-\mathcal{B}(t, t^{\prime}, c))\boldsymbol\theta_{r}^{t},
\end{array}\right.
\end{equation}
where $\mathcal{B}(t, t^{\prime}, c)$ is a Boolean function that returns one only when both $t\geq t^{\prime}$ and $t\ \operatorname{mod} c==0$, otherwise it returns zero. $\mathcal{Z}_{n}$ and $\mathcal{Z}_{r}$ are optimizers for natural training and adversarial training assignments.

\subsection{Theoretical Analysis}
In this part, we theoretically analyze how base learners help global learner in Generalist. For brevity, we omit the expectation notation over samples from each distribution without losing generalization.
\begin{definition}
(\textbf{Tradeoff Regret with Mixed Strategies}) For the natural training assignment $a_1$ and adversarial training assignment $a_2$, consider an algorithm generates the trajectory of states $\boldsymbol\theta_1$ and $\boldsymbol\theta_2$ for two base learners, the regret of both base learners on its corresponding loss function $\ell_1$, $\ell_2$ is
\begin{equation}
\mathbf{R}_{T}=\frac{1}{2} \sum_{a=1}^{2}\left(\sum_{t=1}^{T} \ell_{a}\left(\boldsymbol\theta_a^t\right)-\inf _{\boldsymbol\theta_a^t \in \Theta} \sum_{t=1}^{T} \ell_a\left(\boldsymbol\theta_{a}^{t}\right)\right).
\end{equation}
\end{definition}
The last term obtains the oracle state $\boldsymbol\theta_{a}^{\star}$, theoretically optimal parameters for each task $a$. $\mathbf{R}_{T}$ is the sum of the difference between the parameters of each base learner and the theoretically optimal parameters for each task.
Based on the definition, we can give the following upper bound on the expected error of classifier trained by Generalist with respect to $\mathbf{R}_{T}$ as:
\begin{theorem}
\label{the:2}
(Proof in Appendix~\ref{apd:b2}) Consider an algorithm with regret bound $R_{T}$ that generates the trajectory of states for two base learners, for any parameter state $\boldsymbol\theta \in \Theta$, given a sequence of convex surrogate evaluation functions ${\ell: \Theta\mapsto [0, 1]_{a\in \mathcal{A}}}$ drawn i.i.d. from some distribution $\mathcal{L}$, the expected error of the global learner $\boldsymbol\theta_{g}$ on both tasks over the test set can be bounded with probability at least $1-\delta$:
\begin{equation}
\underset{\ell \sim \mathcal{L}}{\mathbb{E}} \ell\left(\boldsymbol\theta_{g}\right) \leq \underset{\ell \sim \mathcal{L}}{\mathbb{E}} \ell\left(\boldsymbol\theta\right)+\frac{\mathbf{R}_{T}}{T}+2\sqrt{\frac{2}{T}\log \frac{1}{\delta}}.
\end{equation}
\end{theorem}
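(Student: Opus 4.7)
The theorem is an online-to-batch conversion, so the plan is to view Generalist as an online learner that at each round $t$ plays the mixed state $\boldsymbol\theta_t := \gamma\boldsymbol\theta_r^t + (1-\gamma)\boldsymbol\theta_n^t$, identify (up to EMA weighting) the global learner $\boldsymbol\theta_g$ with the Cesaro average $\tfrac{1}{T}\sum_{t=1}^T \boldsymbol\theta_t$, and then combine convexity of $\ell$, the regret bound $\mathbf R_T$, and Azuma--Hoeffding concentration to move between the empirical and the expected loss.

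The first step I would carry out is to apply Jensen's inequality: since each $\ell$ drawn from $\mathcal L$ is convex on $\Theta$,
\begin{equation*}
\ell(\boldsymbol\theta_g)\;\le\;\tfrac{1}{T}\sum_{t=1}^T \ell(\boldsymbol\theta_t),
\end{equation*}
and this passes under $\mathbb E_{\ell\sim\mathcal L}$. Next, for a fixed comparator $\boldsymbol\theta\in\Theta$, introduce the two martingale difference sequences $X_t = \mathbb E_{\ell}\ell(\boldsymbol\theta_t)-\ell_t(\boldsymbol\theta_t)$ (adapted to the filtration generated by $\ell_1,\dots,\ell_{t-1}$, since $\boldsymbol\theta_t$ is predictable) and $Y_t = \ell_t(\boldsymbol\theta)-\mathbb E_\ell \ell(\boldsymbol\theta)$. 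Because the evaluation functions take values in $[0,1]$ both increments are bounded in $[-1,1]$, so Azuma--Hoeffding, applied twice and combined by a union bound, yields with probability at least $1-\delta$
\begin{equation*}
\tfrac{1}{T}\sum_t \mathbb E_\ell \ell(\boldsymbol\theta_t)\;\le\;\tfrac{1}{T}\sum_t \ell_t(\boldsymbol\theta_t) + \sqrt{\tfrac{2}{T}\log\tfrac{2}{\delta}},
\end{equation*}
and symmetrically $\tfrac{1}{T}\sum_t \ell_t(\boldsymbol\theta) \le \mathbb E_\ell \ell(\boldsymbol\theta) + \sqrt{\tfrac{2}{T}\log\tfrac{2}{\delta}}$.

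The third step plugs in the regret hypothesis. By the Definition of $\mathbf R_T$ with the mixed strategy $\boldsymbol\theta_t$, and linearity of the mixture across the two assignments $a_1,a_2$, one gets
\begin{equation*}
\sum_t \ell_t(\boldsymbol\theta_t)\;\le\;\inf_{\boldsymbol\theta\in\Theta}\sum_t \ell_t(\boldsymbol\theta)+\mathbf R_T\;\le\;\sum_t\ell_t(\boldsymbol\theta)+\mathbf R_T.
\end{equation*}
Chaining this with the two concentration inequalities and dividing by $T$ produces the claimed bound $\mathbb E\ell(\boldsymbol\theta_g)\le \mathbb E\ell(\boldsymbol\theta)+\mathbf R_T/T+2\sqrt{(2/T)\log(1/\delta)}$, where the $\log(2/\delta)$ coming out of the union bound is absorbed into the stated constant.

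The main obstacle I expect is step one: the global learner in Algorithm~\ref{alg:Generalist} is not the plain uniform average of the base-learner iterates but an exponential moving average with decay $\alpha'$, further complicated by the periodic resets $\boldsymbol\theta_n,\boldsymbol\theta_r\leftarrow\boldsymbol\theta_g$ every $c$ epochs. To make Jensen bite, I would argue that the EMA is itself a convex combination of past $\boldsymbol\theta_t$ with weights summing to $1$, so convexity still gives $\ell(\boldsymbol\theta_g)\le \sum_t w_t\ell(\boldsymbol\theta_t)$; then either assume $\alpha'$ and $c$ are chosen so that the $w_t$ approach uniform in the tail (the usual EMA-equivalent-sample-size argument), or restate the result with the effective horizon $T_{\text{eff}}=1/(1-\alpha')$ in place of $T$. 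The remaining steps are standard Cesa--Bianchi--Conconi--Gentile online-to-batch bookkeeping and should go through without surprise.
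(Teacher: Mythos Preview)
Your proposal is correct and follows essentially the same route as the paper: Jensen's inequality via convexity of the surrogate $\ell$ to pass from $\boldsymbol\theta_g$ to the average of the iterates, two applications of Azuma--Hoeffding (the paper cites them as Proposition~1 from Cesa-Bianchi--Conconi--Gentile and its symmetric version) to move between empirical and expected loss, and the regret definition sandwiched in between. The paper additionally routes through an auxiliary ``subproblem regret'' $\bar{\mathbf R}\le\mathbf R_T$ before reaching the final bound, but this is cosmetic; your direct use of $\mathbf R_T$ is equivalent.

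On the obstacle you flag: the paper handles the EMA issue exactly as you suggest in your first option---it simply asserts that $\boldsymbol\theta_g$ is a convex (linear) combination of the historical states $\boldsymbol\theta_n^t,\boldsymbol\theta_r^t$ and invokes Jensen, without reweighting to a uniform average or introducing an effective horizon. It also does not explicitly carry out a union bound over the two concentration events (each is stated with failure probability $\delta$), so your treatment is in fact slightly more careful than the paper's on both points.
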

So the above inequality indicates that any strategy beneficial to reducing the error of each task that makes $\mathbf{R}_{T}$ smaller will decrease the error bound of the global learner. Considering Generalist divides the tradeoff problem into two independent tasks, Theorem \ref{the:2} guarantees the upper bound of the risks given by the global learner trained by Generalist will get lower once the error for each task becomes lower. In practice, we can apply customized learning rate strategies, optimizers, and weight averaging to guarantee the error reduction of each base learner. 
\renewcommand{\arraystretch}{0.85}
\begin{table*}[!t]
\caption{{Comparison of our algorithm with different training methods using ResNet-18 and WRN-32-10 on CIFAR-10. The maximum perturbation is $\varepsilon=8/255$. The best checkpoint is selected based on the tradeoff between clean accuracy and robust accuracy against PGD20 on the test set. We highlight the top two results on each task. We omit standard deviations of Generalist as they are very small ($<0.5\%$). Average accuracy rates (in \%) have shown that the proposed Generalist method greatly mitigates the tradeoff of the model.}}\label{tab:cifar10}
\centering
\footnotesize
\vspace{-7pt}
\begin{tabular}{lccccccccccc}
\multicolumn{12}{c}{(a) Evaluation results based on ResNet-18.}                                                                                                                                                                                                                         \\
\multicolumn{12}{l}{}                                                                                                                                                                                                                                                                       \\ \hline
\multicolumn{1}{l|}{Method}    & NAT                  & PGD20                & PGD100               & MIM                  & CW                   & $\operatorname{APGD}_{ce}$               & $\operatorname{APGD}_{dlr}$              & $\operatorname{APGD}_{t}$                & $\operatorname{FAT}_{t}$                 & Square               & AA                   \\ \hline\hline
\multicolumn{1}{l|}{NT}        & \textbf{93.04}       & 0.00                 & 0.00                 & 0.00                 & 0.00                 & 0.00                 & 0.00                 & 0.00                 & 0.00                 & 0.00                 & 0.00                 \\
\multicolumn{1}{l|}{AT ($\beta=1$)}     & 84.32                & 48.29                & 48.12                & 47.95                & 49.57                & \textbf{47.47}       & 48.57                & 45.14                & 46.17                & 54.21                & 44.37                \\
\multicolumn{1}{l|}{AT ($\beta=1/2$)}     & 87.84                & 44.51                & 44.53                & 47.30                & 44.93                & 40.58                & 42.55                & 40.20                & 44.56                & 50.76                & 40.06                \\ \hline\hline
\multicolumn{1}{l|}{TRADES ($\lambda=6$)} & 83.91                & \textbf{54.25}       & \textbf{52.21}       & \textbf{55.65}       & \textbf{52.22}       & \textbf{53.47}       & \textbf{50.89}       & \textbf{48.23}       & \textbf{48.53}       & \textbf{55.75}       & \textbf{48.20}       \\
\multicolumn{1}{l|}{TRADES ($\lambda=1$)} & 87.88                & 45.58                & 45.60                & 47.91                & 45.05                & 42.95                & 42.49                & 40.38                & 43.89                & 53.49                & 40.32                \\
\multicolumn{1}{l|}{FAT}       & 87.72                & 46.69                & 46.81                & 47.03                & 49.66                & 46.20                & 47.51                & 44.88                & 45.76                & 52.98                & 43.14                \\
\multicolumn{1}{l|}{IAT}       & 84.60                & 40.83                & 40.87                & 43.07                & 39.57                & 37.56                & 37.95                & 35.13                & 36.06                & 49.30                & 35.13                \\
\multicolumn{1}{l|}{RST}       & 84.71                & 44.23                & 44.31                & 45.33                & 42.82                & 41.25                & 42.01                & 40.41                & 46.54                & 50.49                & 37.68                \\ \hline\hline
\multicolumn{1}{l|}{Generalist}    & \textbf{89.09}       & \textbf{50.01}       & \textbf{50.00}       & \textbf{52.19}       & \textbf{50.04}       & 46.53                & \textbf{48.70}       & \textbf{46.37}       & \textbf{47.32}       & \textbf{56.68}       & \textbf{46.07}       \\ \hline
\multicolumn{12}{l}{}                                                                                                                                                                                                                                                                       \\
\multicolumn{12}{c}{(b) Evaluation results based on WRN-32-10.}                                                                                                                                                                                                                         \\
\multicolumn{12}{l}{}                                                                                                                                                                                                                                                                       \\ \hline
\multicolumn{1}{l|}{Method}    & NAT                  & PGD20                & PGD100               & MIM                  & CW                   & $\operatorname{APGD}_{ce}$               & $\operatorname{APGD}_{dlr}$              & $\operatorname{APGD}_{t}$                & $\operatorname{FAT}_{t}$                 & Square               & AA                   \\ \hline\hline
\multicolumn{1}{l|}{NT}        & \textbf{93.30} & 0.01           & 0.02           & 0.05           & 0.00           & 0.00           & 0.00           & 0.00           & 0.87           & 0.28           & 0.00           \\
\multicolumn{1}{l|}{AT ($\beta=1$)}     & 87.32          & 49.01          & 48.83          & 48.25          & 52.80          & 48.83          & 49.00          & 46.34          & 48.17         & 54.26          & 46.11          \\
\multicolumn{1}{l|}{AT ($\beta=1/2$)}      & 89.27          & 48.95          & 48.86          & 51.35          & 49.56          & 45.98          & 47.66          & 44.89          & 46.42          & 56.83          & 44.81          \\ \hline\hline
\multicolumn{1}{l|}{TRADES ($\lambda=6$)} & 85.11          & \textbf{54.58} & \textbf{54.82} & \textbf{55.67} & \textbf{54.91} & \textbf{54.89} & \textbf{55.50}  & \textbf{52.71} & \textbf{52.61} & \textbf{57.62} & \textbf{52.19} \\
\multicolumn{1}{l|}{TRADES ($\lambda=1$)} & 87.20          & 51.33          & 51.65          & 52.47          & 53.19          & 51.60          & 51.88          & 49.97          & 50.01          & 54.83          & 49.81          \\
\multicolumn{1}{l|}{FAT}       & 89.65          & 48.74          & 48.69          & 48.24          & 52.11          & 48.50          & 48.81          & 46.70          &  46.17         & 51.51          & 44.73          \\
\multicolumn{1}{l|}{IAT}       & 87.93          & 50.55          & 50.72          & 52.37          & 48.71          & 47.71          & 46.55          & 43.84          & 45.78          & 56.52          & 43.80          \\
\multicolumn{1}{l|}{RST}       & 87.27          & 46.55          & 46.76          & 47.02          & 45.99          & 45.73          & 46.58          & 45.78          & 43.18          & 52.44          & 41.52          \\ \hline\hline
\multicolumn{1}{l|}{Generalist}    & \textbf{91.03} & \textbf{56.88} & \textbf{56.92} & \textbf{58.87} & \textbf{57.23} & \textbf{53.94} & \textbf{55.80} & \textbf{53.00} & \textbf{53.65} & \textbf{63.10} & \textbf{52.91} \\ \hline
\multicolumn{12}{c}{}                    
\vspace{-11pt}
\end{tabular}
\end{table*}

\section{Experiments}

We conduct a series of experiments on ResNet-18 \cite{DBLP:conf/cvpr/HeZRS16} and WRN-32-10 \cite{DBLP:journals/corr/ZagoruykoK16} on benchmark datasets MNIST, SVHN, CIFAR-10, and CIFAR-100 under the $L_{\infty}$ norm. 

\textbf{Baselines.} We select six approaches to compare with: AT using PGD ($\beta=1$ in Eq. \ref{eqn:AT_basic}) \cite{DBLP:conf/iclr/MadryMSTV18}, AT using the half-half loss ($\beta=1/2$ in Eq. \ref{eqn:AT_basic}) \cite{DBLP:journals/corr/GoodfellowSS14}, TRADES with different $\lambda$ \cite{DBLP:conf/icml/ZhangYJXGJ19}, Friendly Adversarial Training (FAT) \cite{DBLP:conf/icml/ZhangXH0CSK20}, Interpolated Adversarial Training (IAT) \cite{DBLP:conf/ccs/LambVKB19}, and Robust Self Training (RST) \cite{DBLP:conf/icml/RaghunathanXYDL20} used labeled data for fair comparison. For Generalist, we set $t^{\prime}=75$ and the optimal mixing strategy will be discussed in Section \ref{sec:mixing}.

\textbf{Evaluation.} To evaluate the robustness of the proposed method, we apply several adversarial attacks including PGD \cite{DBLP:conf/iclr/MadryMSTV18}, MIM \cite{DBLP:conf/cvpr/DongLPS0HL18}, CW \cite{DBLP:conf/sp/Carlini017}, AutoAttack (AA) \cite{DBLP:conf/icml/Croce020a} and all its components ($\operatorname{APGD}_{ce}$, $\operatorname{APGD}_{dlr}$, $\operatorname{APGD}_t$, $\operatorname{FAB}_t$, and Square attacks).

\subsection{Tradeoff Performance on Benchmark Datasets}
\label{sec:exp}
To comprehensively manifest the power of our Generalist method, we present the results of both ResNet-18 and WRN-32-10 on CIFAR-10 in Table \ref{tab:cifar10}. 

In Table \ref{tab:cifar10}(a), Generalist consistently improves standard test error relative to models trained by several robust methods, while maintaining adversarial robustness at the same level. 
More specifically, Generalist achieves the second highest standard accuracy of 89.09\% (only lower than 93.04\% obtained by natural training (NT)), while meantime robust accuracy against AA is 46.07\%, hanging on to 48.2\% from TRADES. If we force TRADES to meet the same level of clean accuracy as Generalist (89\%), the robustness of TRADES against APGD will drop to 30\% (see TRADES in Appendix~\ref{apd:a4}), which is significantly worse than Generalist. That means it is hard to obtain acceptable robustness but maintain clean accuracy above 89\% in the joint training framework even if it is equipped with an advanced loss function, while the improvement of Generalist is notable since we only use the naive cross-entropy loss.
Contrary to FAT managing the tradeoff through adaptively decreasing the step size of PGD, which still hurts robustness a lot, Generalist is the only method with clean accuracy above 89\% and robust accuracy against AA above 46\%. We should emphasize the final obtained model of Generalist is the \emph{same size} as other trained models are. For the training time, Generalist does perform both NT and naive AT but the cost of NT is negligible, so the overhead of Generalist is smaller than TRADES, and whatever serial and parallel versions of Generalist are even \emph{faster} than TRADES (see Appendix~\ref{apd:a4}). 

Things become more obvious when it comes to WRN-32-10. In Table \ref{tab:cifar10}(b), the gap between test natural accuracy of Generalist and NT is reduced to 2.27\%, a relative decrease of 3.65\% in standard test error as compared to the second highest natural accuracy (except NT) achieved by FAT. 
It is also remarkable that the boost of accuracy does not hurt the robustness of Generalist, instead, Generalist even outperforms TRADES across multiple types of adversarial attacks. 
In particular, we find that Generalist has a standard test error of 6.7\% while TRADES with $\lambda=6$ has a standard test error of 14.89\% only. And the improved robustness of Generalist among PGD20/100, MIM, CW, $\operatorname{FAT}_{t}$ and Square is conspicuous. Besides, the best performance on AA, which is an ensemble of different attacks and the most powerful adaptive adversarial attack so far, demonstrates the reliability of Generalist.
Likewise, only Generalist attains robust accuracy of AA higher than 52\% along with clean accuracy higher than 90\%. It should be emphasized that these features confirm the practicability of Generalist.
In short, Generalist has consistently improved robustness without loss of natural accuracy. More results on benchmark datasets of MNIST, SVHN, and CIFAR-100 are in Appendix~\ref{apd:a2} and \ref{apd:a3}. 

\subsection{Comprehensive Understanding of Generalist}
We run a number of ablations to analyze the Generalist framework in this part. As illustrated in Algorithm \ref{alg:Generalist}, two factors control the tradeoff between accuracy and robustness of the global learner: \emph{frequency of communication $c$} and \emph{mixing ratio $\gamma$}. Here, we investigate how these parameters affect performance. 
If not specified otherwise, the experiments are conducted on CIFAR-10 using ResNet-18.

\subsubsection{Mixing Strategies of $\gamma$} 
In Generalist, $\gamma$ controls the tradeoff via balancing the contribution of individuals to the global learner when base learners are gradually well trained. Note that $\gamma$ is a scalar but we do not explicitly assign a fixed value to it. Instead, we set several breakpoints and dynamically adjust the value along the training process using a piecewise linear function to decrease.

Results are shown in Figure \ref{fig:c_and_m}. The numbers in brackets are the values at the 0/40/80/120-th epoch. If $\gamma$ gets smaller, the base learner in charge of natural classification has a pronounced influence on the global learner. Among all configurations, the best one is to apply $\gamma=(1,1,1,0)$ and $c=5$ to the global learner after the 75th epoch. 
When compared to strategies that $\gamma$ decays during late periods, $\gamma=(1,1,0.8,0.2)$ shows lower standard and robust accuracy, confirming that more sophisticated initialization could be useful for both accuracy and robustness.
With the increase of the last breakpoint of dynamical strategies, the robust accuracy gradually increases; while the standard accuracy decreases by a small margin. 
We also investigate the static/dynamic strategy for $\gamma$. By observing $\gamma=0.5$ and $\gamma=(1,1,1,0.5)$, the scheduled mixing strategy makes Generalist more robust to various attacks.

\subsubsection{Communication Frequency $c$} 
\label{sec:mixing}
In Generalist, $c$ controls the communication frequency between the global learner and base learners. Therefore, for $c$, with the fixed mixing ratio strategy, we sweep over the frequency of communication from 1 to 15.

Results are shown in Figure \ref{fig:c_and_m}, and we have the following observations. 
\begin{figure}[!t]
  \begin{minipage}{1.0\linewidth}
    \makebox[.5\linewidth]{\includegraphics[width=.5\linewidth]{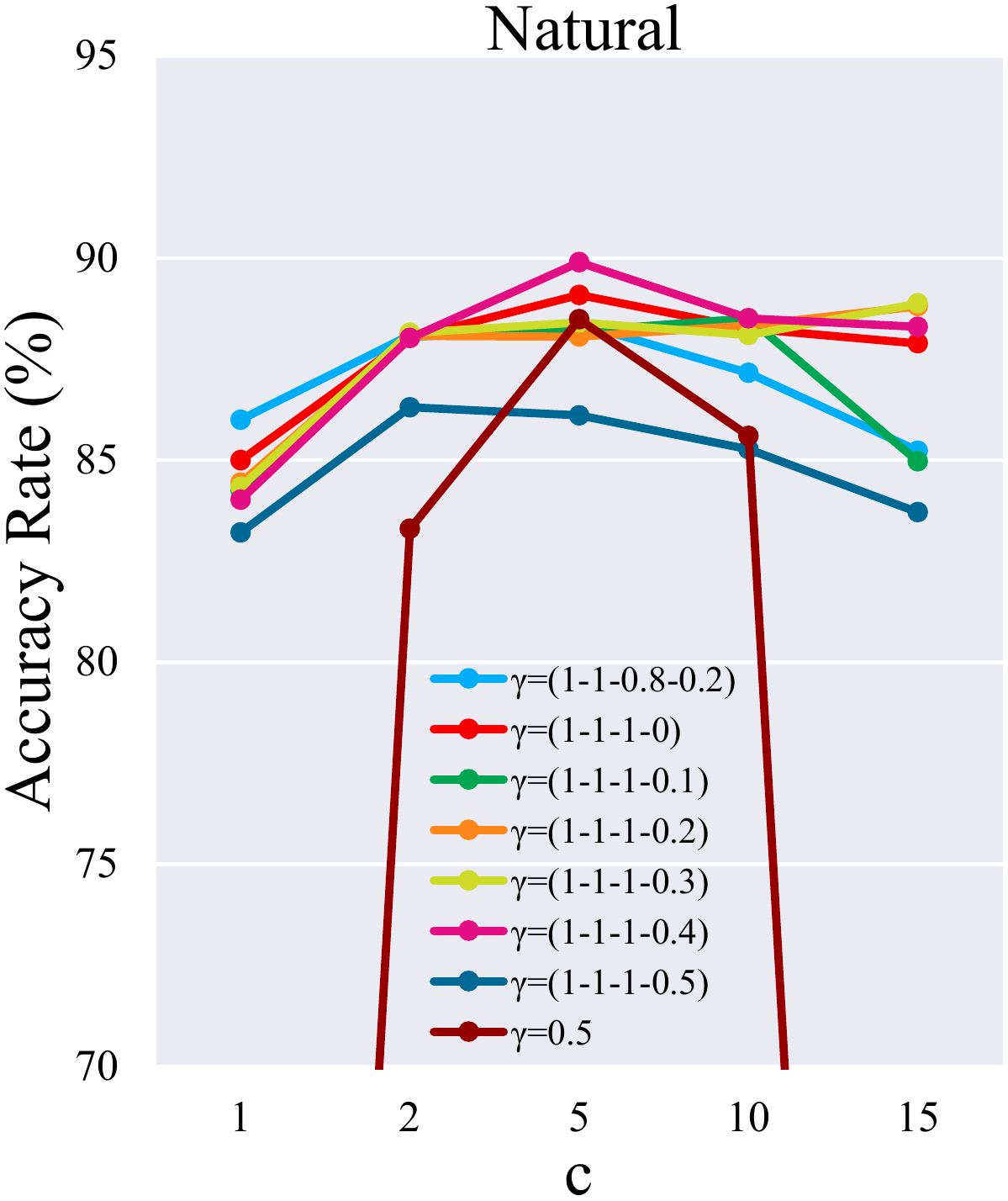}}%
    \makebox[.5\linewidth]{\includegraphics[width=.5\linewidth]{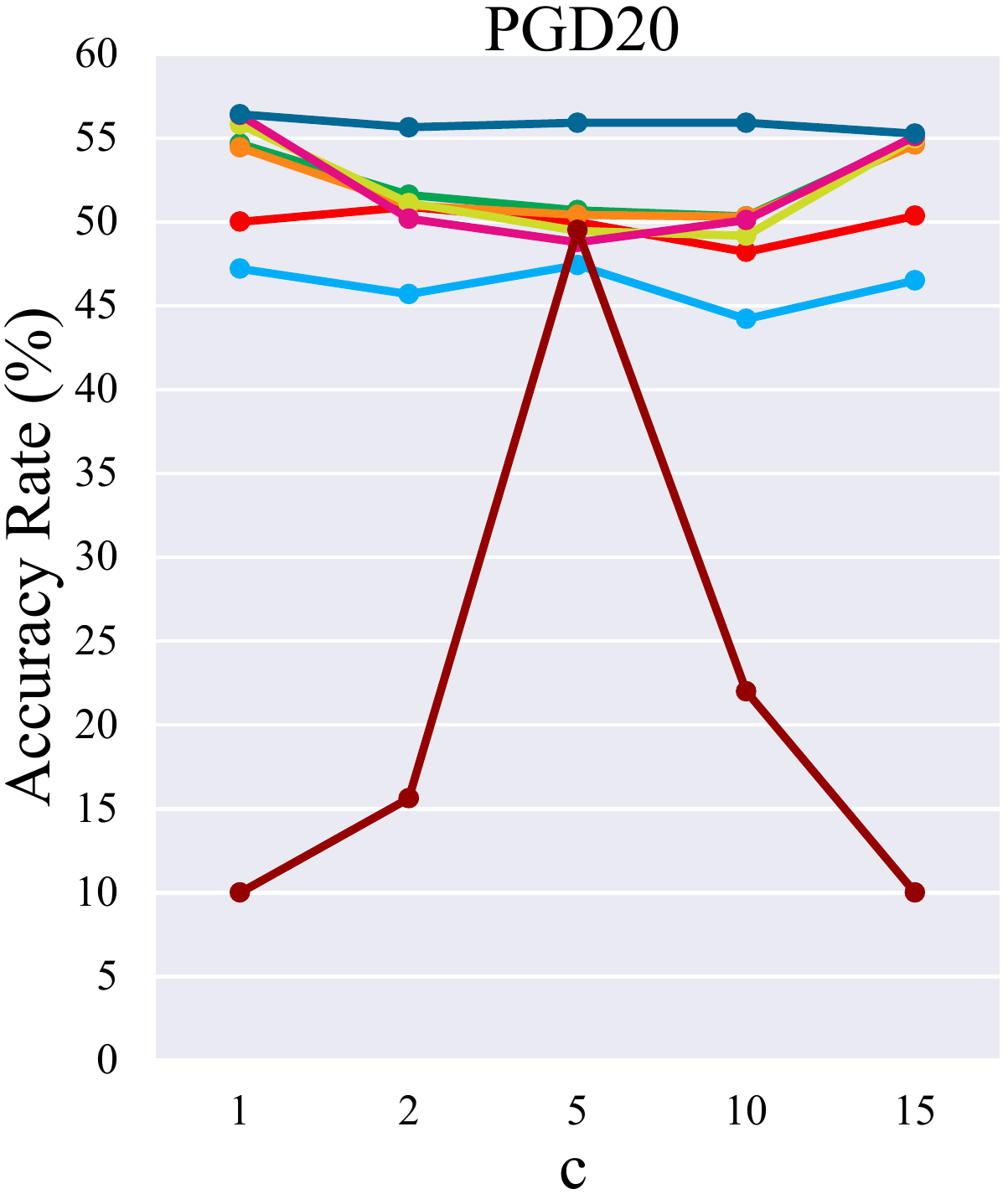}}
    \makebox[.5\linewidth]{\includegraphics[width=.5\linewidth]{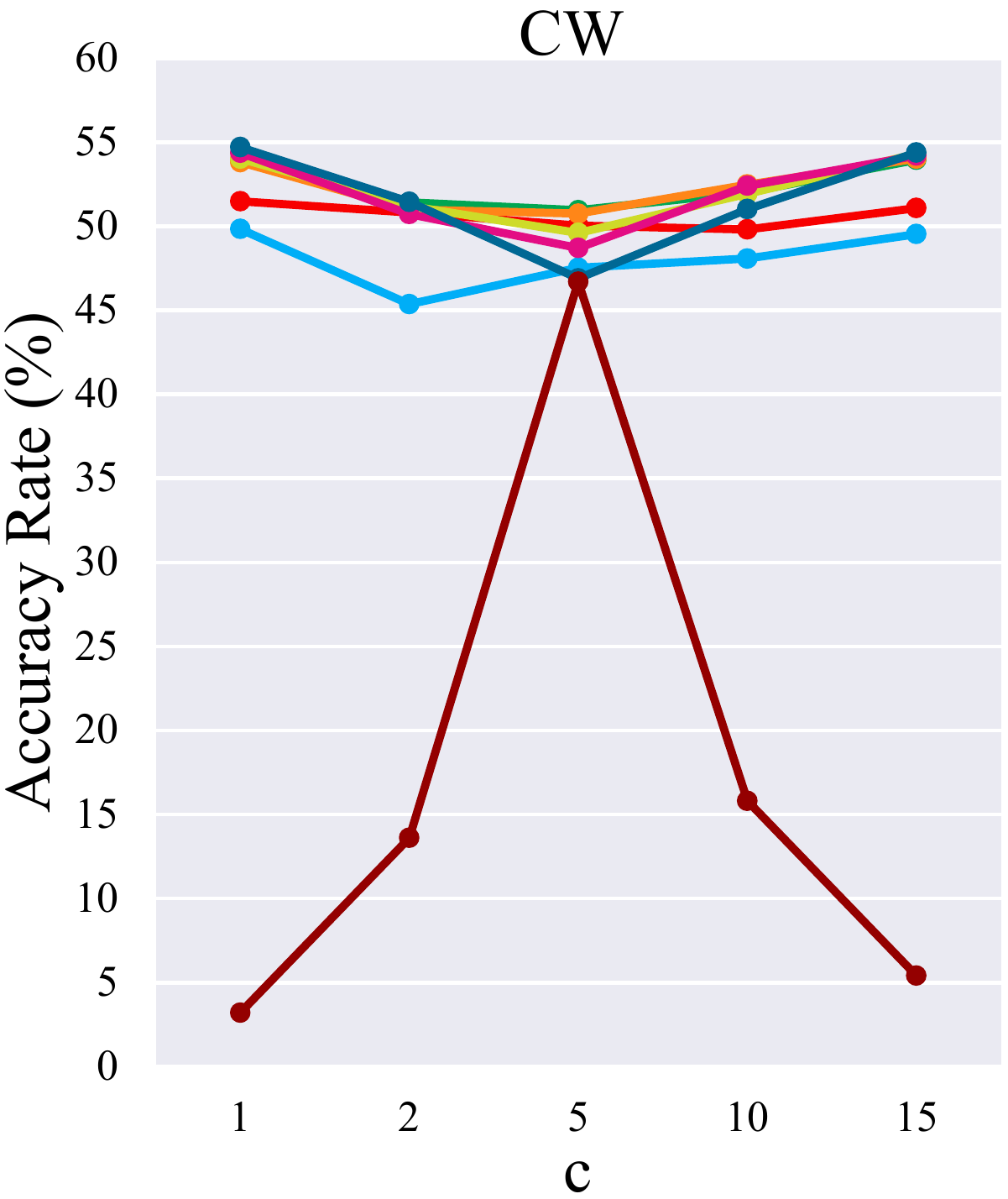}}%
    \makebox[.5\linewidth]{\includegraphics[width=.5\linewidth]{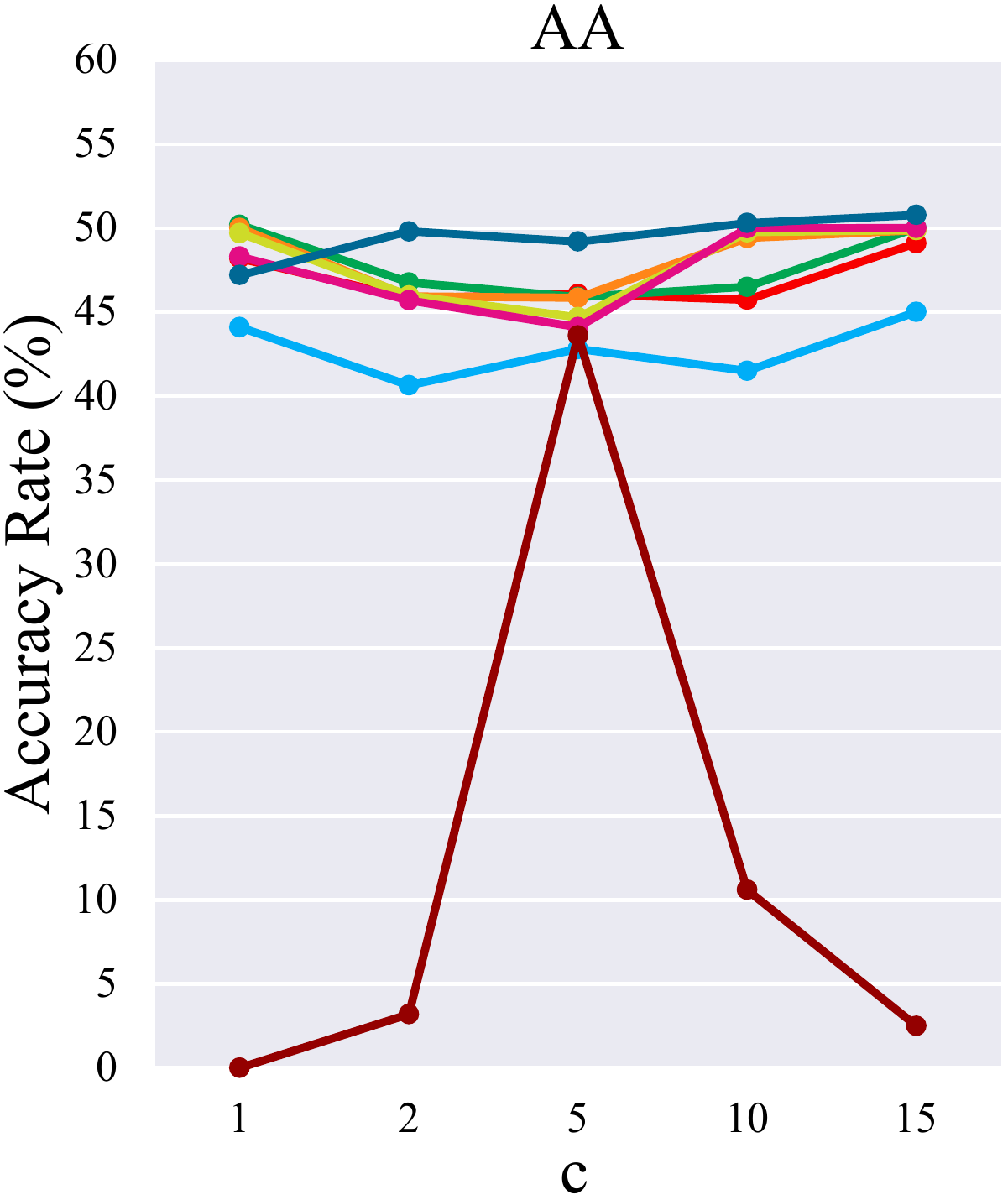}}
  \end{minipage}%
    \caption{{Generalist with different mixing ratio strategies and various values of frequency on CIFAR-10. We evaluate both natural accuracy and robustness against PGD20, C\&W and AA attacks using ResNet-18.}}\label{fig:c_and_m}
    \vspace{-11pt}
\end{figure}
Intuitively, a larger $c$ means base learners communicate with the global learner less frequently to get the initialization, so they barely have the opportunity to move alternately towards two optimal solution manifolds. 
But specifically, the natural accuracy falls back down after reaching the peak while the robust accuracy in different adversarial settings roughly shows a trough. Such observation manifests that too much/little communication has a negative influence on standard accuracy but results in relatively higher robustness. It captures a tradeoff between natural and robust errors with respect to $c$. 

\begin{figure*}[!ht]
  \begin{minipage}{1.0\linewidth}
    \makebox[.24\linewidth]{\includegraphics[width=.24\linewidth]{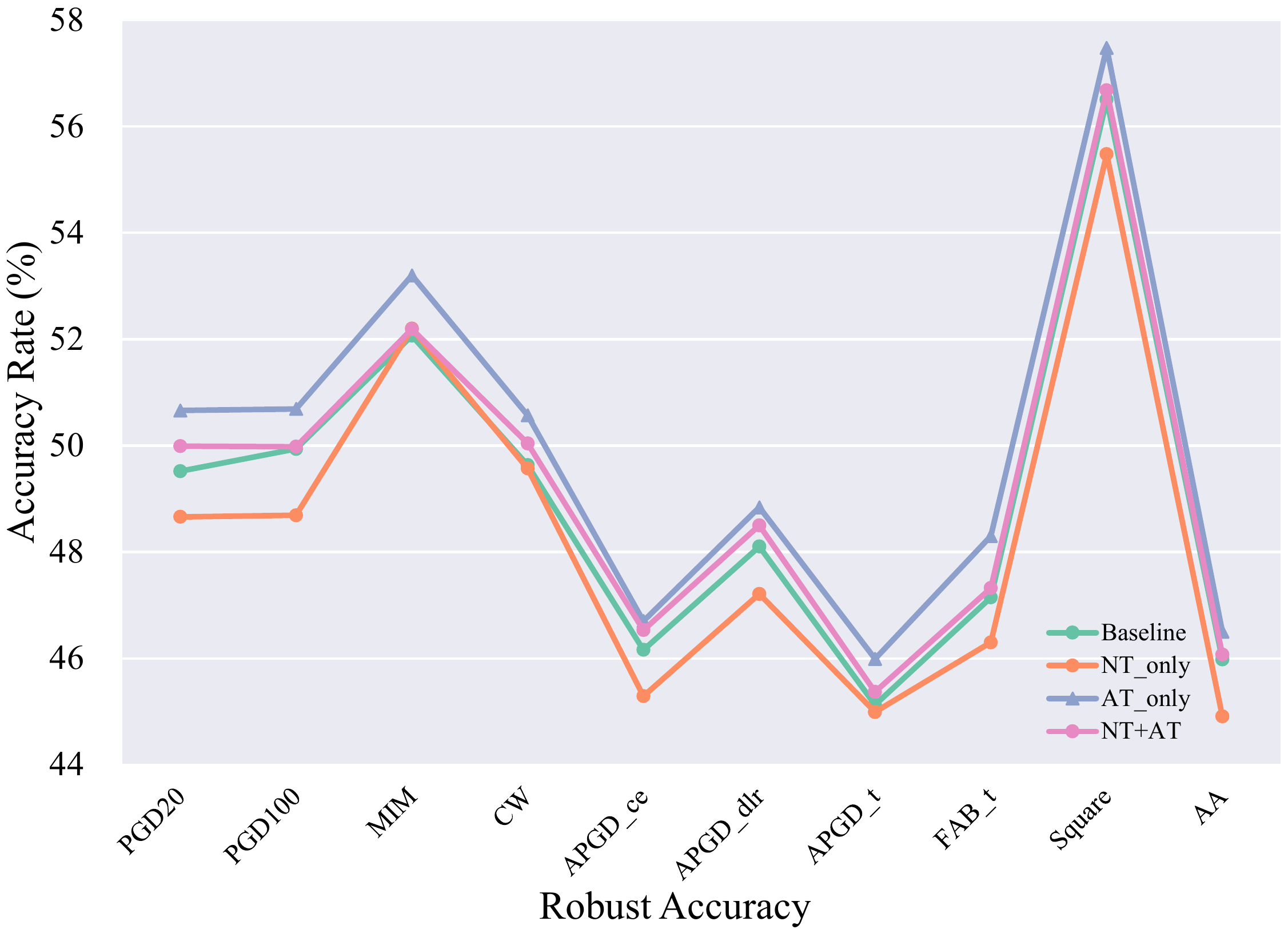}}%
    \makebox[.25\linewidth]{\includegraphics[width=.25\linewidth]{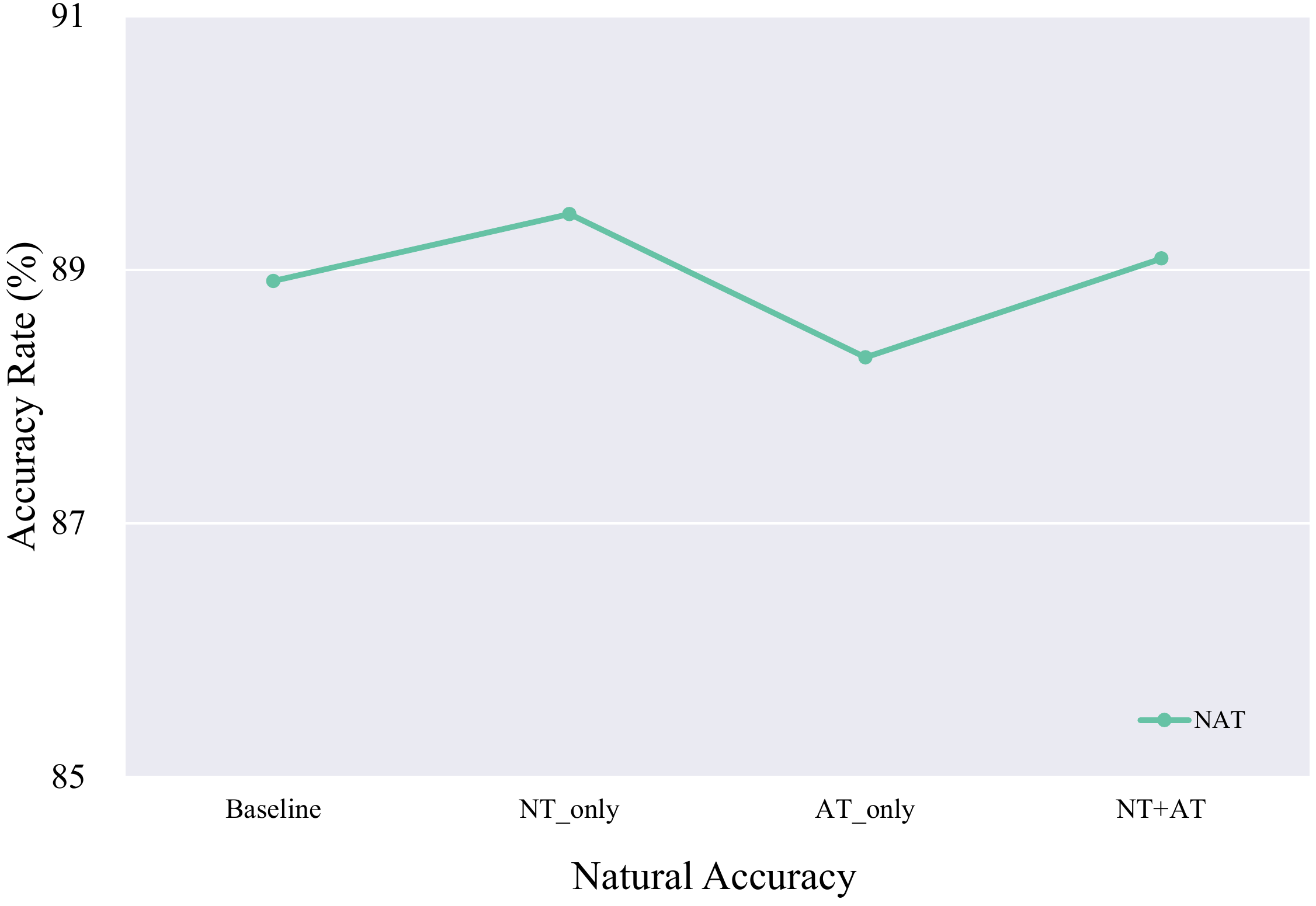}}%
    \makebox[.27\linewidth]{\includegraphics[width=.24\linewidth]{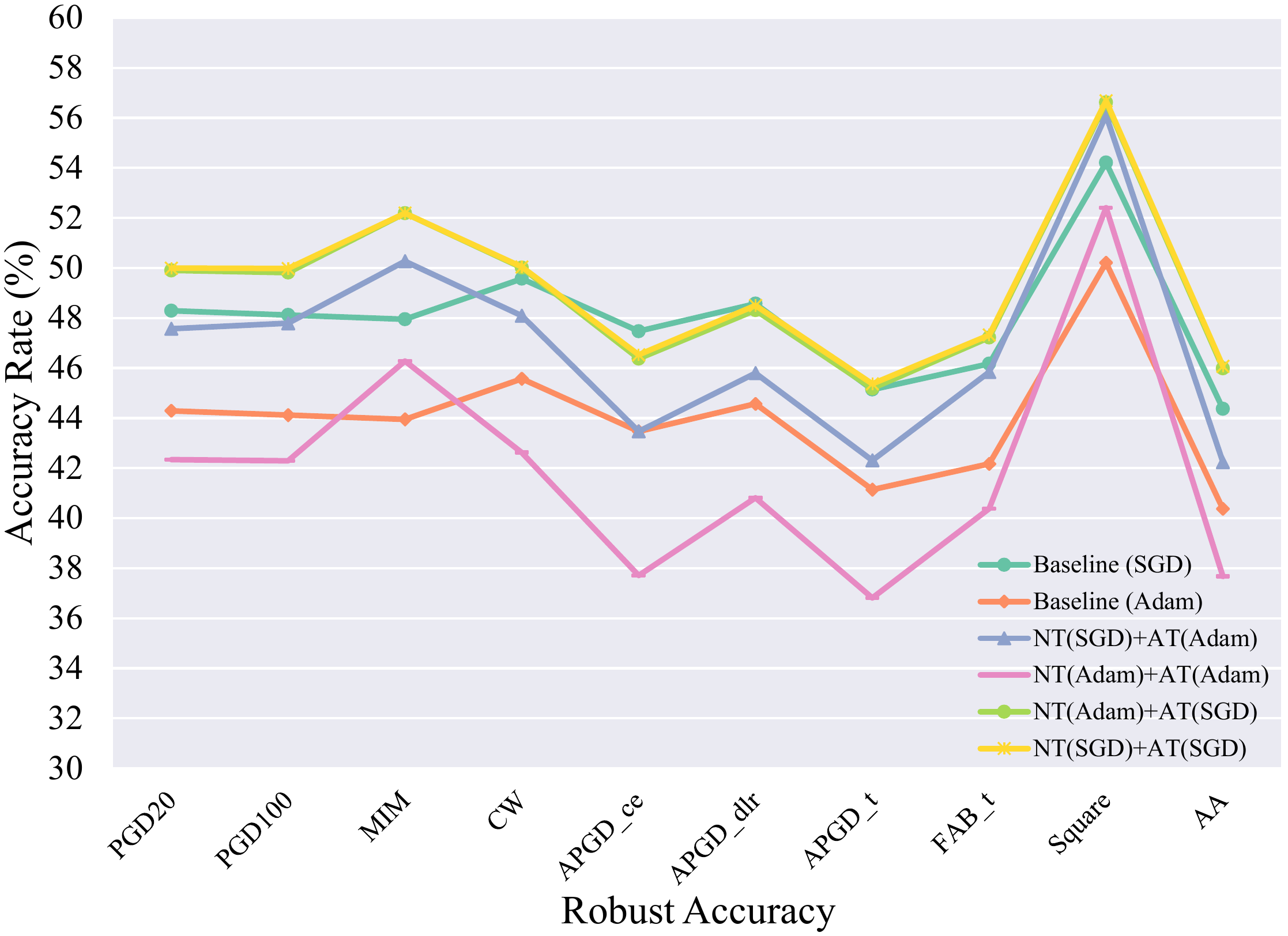}}%
    \makebox[.25\linewidth]{\includegraphics[width=.25\linewidth]{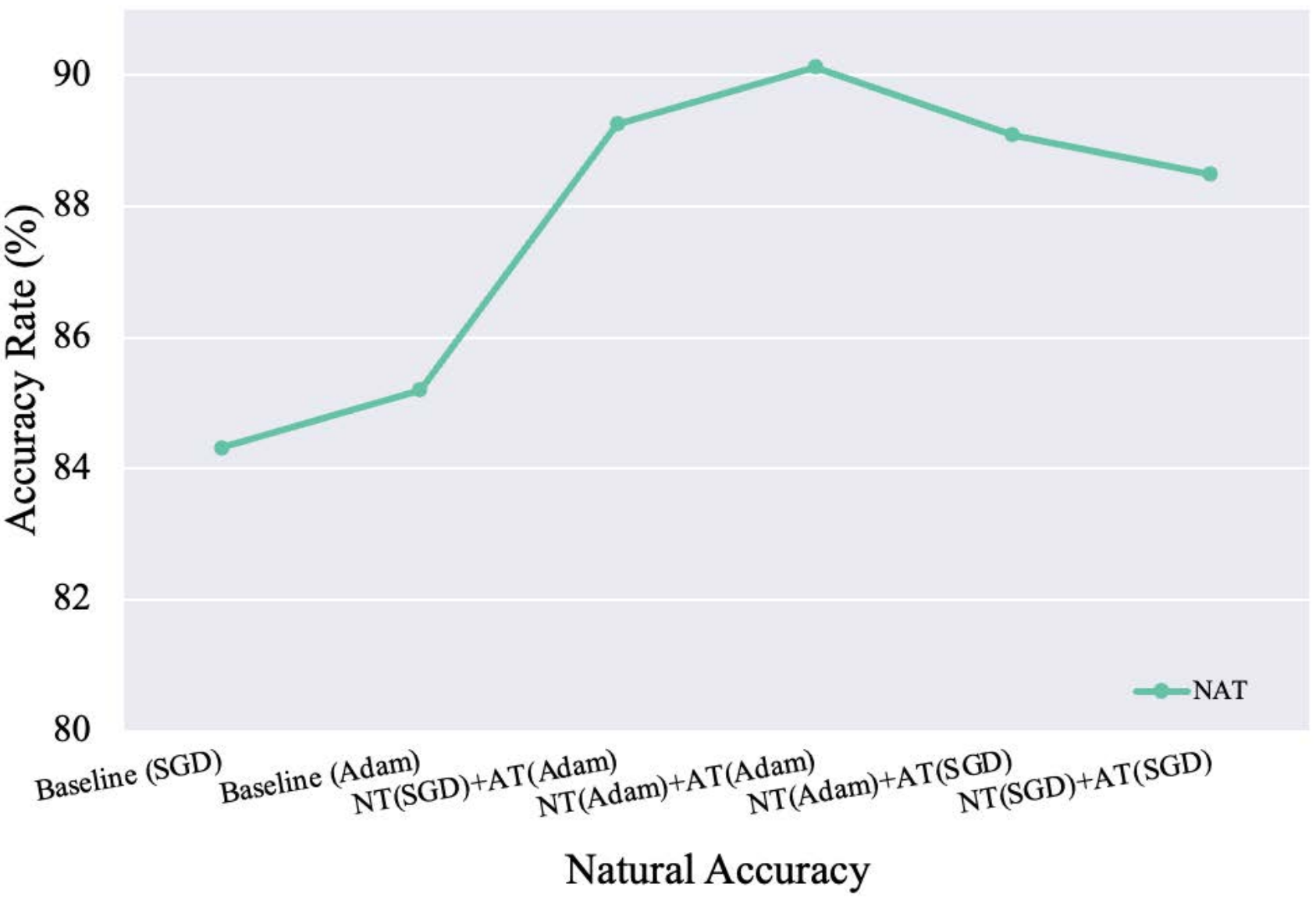}}%
    
    \makebox[.5\linewidth]{\small (a) Weight Averaging}%
    \makebox[.5\linewidth]{\small (b) Different Optimizers}%
  \end{minipage}%
\caption{{(a) We apply weight averaging to one of the base learners or both of them. Results demonstrate that using weight averaging through training can bring performance boost in its corresponding sub-task, and thus has an effect on predictions of the global learner. (b) Base learners of Generalist optimized by different optimizers. The optimal selection is using Adam for the natural classification task but maintaining SGD for the adversarial one.}}\label{fig:wa_opt}
\vspace{-7pt}
\end{figure*}

\subsubsection{Parameter Selection}

In practice, it is natural to select the mixing parameter $\gamma$ and the frequency of communication $c$ under a scenario without knowing the target model or dataset. We can find the best parameters on specific architecture and dataset, which is then transferred to others, \textit{i.e.}, choosing the best $\gamma$, $c$, and their strategies on one model/dataset and then used for other models/datasets, which still works well. Specifically, for the experimental results on MNIST/SVHN/CIFAR100 shown in Appendix~\ref{apd:a2} and \ref{apd:a3}, we just find the optimal parameters and updating strategies on CIFAR10 and apply similar $\gamma$ and $c$ on the target datasets and architectures without much fine-tuning. The performance is still very good.

\subsection{Customized Policies for Individuals} 
As stressed above, one of the major advantages of Generalist in comparison with the standard joint training framework is that each base learner enables to customize the corresponding strategy for their own tasks freely rather than using the same strategy for all tasks. In this part, we investigate whether Generalist performs better when cooperating with diverse techniques.  

\textbf{Weight Averaging.} Recent works \cite{DBLP:journals/corr/abs-2103-01946,DBLP:conf/uai/IzmailovPGVW18,wang2022selfensemble} have shown that weight averaging (WA) greatly improves both natural and robust generalization. The average parameters of all history model snapshot through the training process to build an ensemble model on the fly. However, such technique cannot benefit both accuracy and robustness in the joint training framework. Therefore, we introduce WA into base learners separately. Results are shown in Figure \ref{fig:wa_opt} (a). We employ WA in either NT (NT\_only) or AT (AT\_only) or both of them (NT+AT). 
Overall, the results confirm that the performance of the global learner can be further improved after both base learners exploit WA. 
But unfortunately, an obvious tradeoff happens if only one of the base learner is equipped with WA. For instance, the standard test accuracy of NT\_only continues to increase at the expense of the drop in the ability to defend attacks.
A likely reason is that WA implicitly controls the learning speed of base learners.
Indeed, the base learner with WA becomes an expert much faster than the one without WA in its sub-task, meaning the fast one is not in accordance with the slow one.
This result is important because it not only illustrates the potential of Generalist comes from its base learners but also identifies a key challenge of tradeoff for future improvement.

\textbf{Different Optimizers.} We also investigate the effect of optimizers designed for different tasks. We choose AT ($\beta=1$) using SGD with momentum and Adam for piecewise learning rate schedule optimized by joint training as the baseline. The initial learning rate for Adam is 0.0001. We alternately apply these two optimizers in each subproblem. The comparison of the results is shown in Figure \ref{fig:wa_opt} (b). We can see that the gap of robust accuracy between models adversarially trained by Adam and the ones trained by SGD is significant.
All three schemes equipped with Adam, namely NT (Adam)+AT (Adam), NT (SGD)+AT (Adam), and Baseline (Adam), perform worse than the ones using SGD when evaluated by adversarial attacks. 
But on the other hand, by comparing the results of Baseline (Adam) and NT (Adam)+AT (SGD), it confirms a proper optimization scheme with respect to data distribution can effectively benefit the corresponding performance without overlooking the other.
That not only demonstrates the necessity of Generalist to decouple task-aware assignments from joint training but also indicates using Adam may not be the principal reason for robustness drop. It is just ill-suited for the outer and inner optimization in AT. Besides, though the best results still come from using SGD, the learning rate for different tasks can be customized which is not feasible in the joint framework, as shown in Appendix~\ref{apd:a5}.

\begin{figure*}[!ht]
  \begin{minipage}{1.0\linewidth}
    \makebox[.5\linewidth]{\includegraphics[width=.45\linewidth]{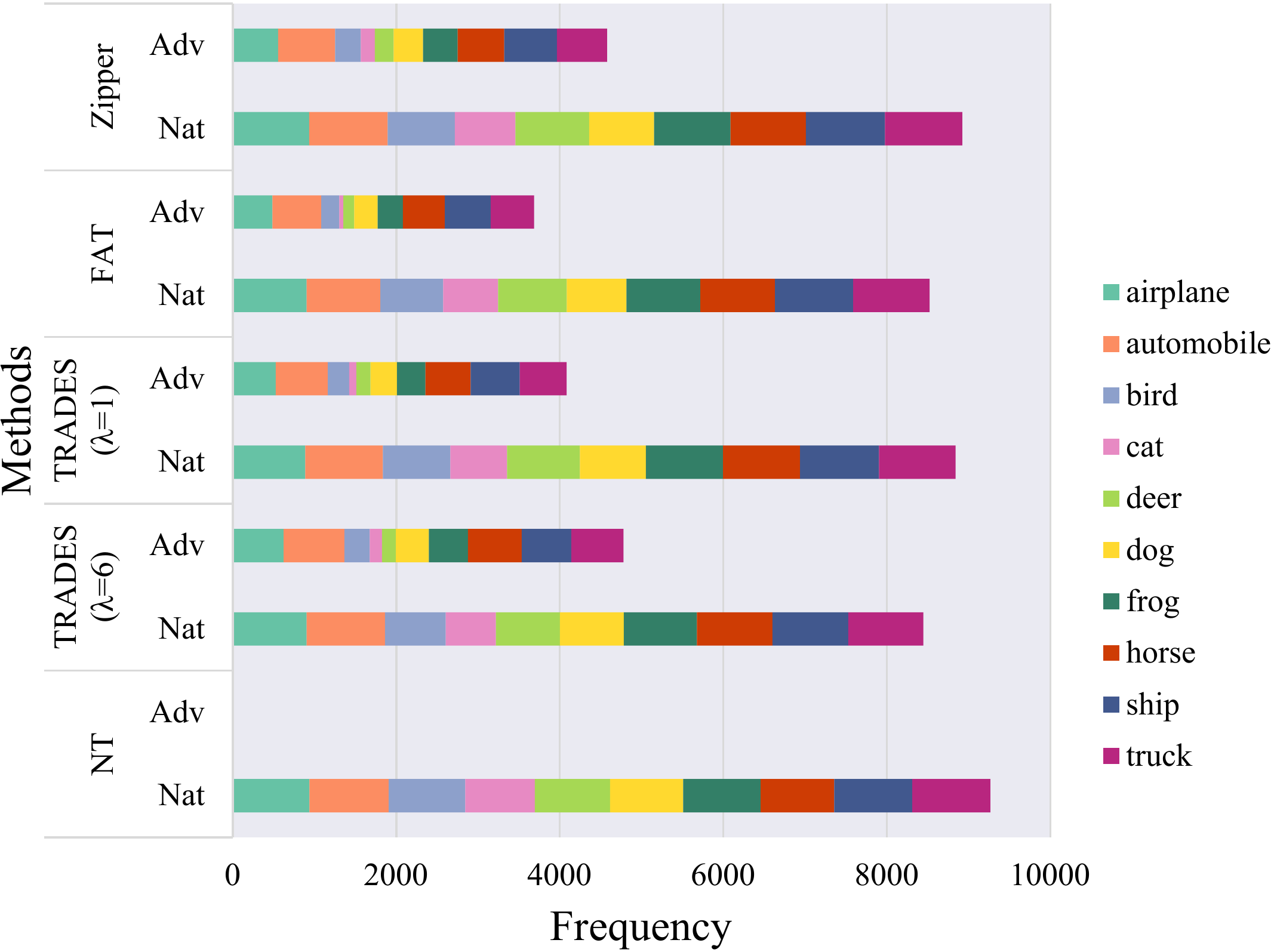}}%
    \makebox[.5\linewidth]{\includegraphics[width=.55\linewidth]{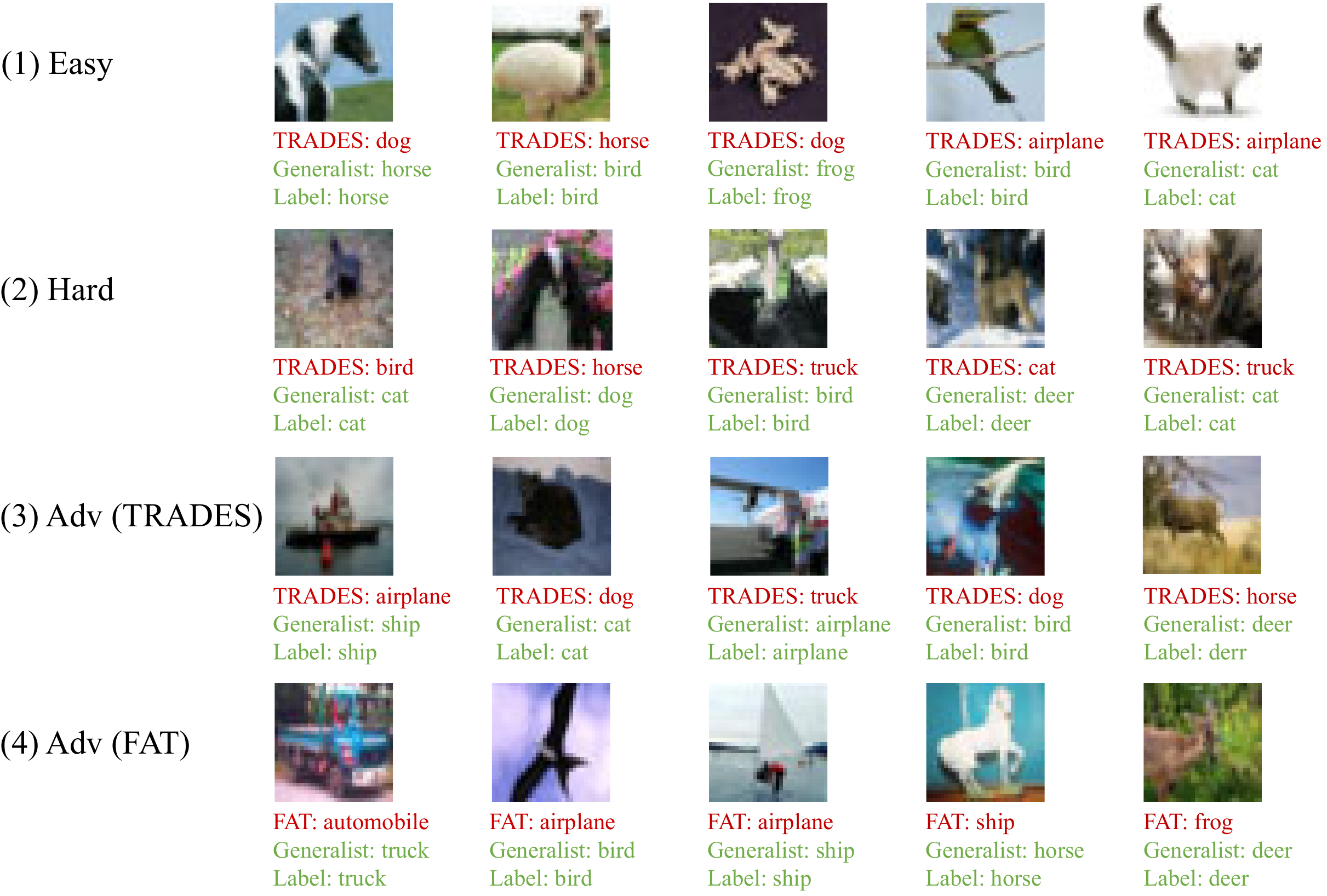}} %
    
    \makebox[.5\linewidth]{\small (a)}%
    \makebox[.55\linewidth]{\small (b)}%
  \end{minipage}%
    \caption{{Analyses of predilections that different robust classifiers have on CIFAR-10 using ResNet-18. (a) Distribution of the correct predictions of different training methods for each class. We separate out results on natural examples from adversarial ones (AA). Note that results of `NT Adv' does not appear in the figure just because they are literally zero. (b) Visualization of samples that other methods misclassify while Generalist makes right predictions. }}\label{fig:vis}
    \vspace{-11pt}
\end{figure*}

\subsection{Visualization} 
Considering the proposed method achieves impressive clean accuracy without a harsh drop in robustness, it is naturally to ask what improvements Generalist has secured in comparison with robust methods in detail.
Thus, we further investigate the predictions that robust classifiers are prone to make.
As shown in Figure \ref{fig:vis}, we provide two perspectives to analyze the differences that classifiers trained by different AT methods. 

To broadly study the case, we perform experiments on NT, TRADES with different $\lambda$, FAT and Generalist, then plot the distribution of the correct predictions of all methods for each class in Figure \ref{fig:vis}(a). 
As evident at first glance, we note that animals are more frequently misclassified, especially cats/dogs in the natural scenario and cats/deers in the adversarial scenario.
In addition, the classifier trained by standard natural training does not always outperform the ones adversarially trained. Actually, they are equally skilled at most categories and the outcome is decided by specific categories (e.g. birds, cats and dogs).
Generalist keeps pace with NT in the natural task, and meanwhile promotes the higher improvements in difficult items (e.g. cats and deers) against AA attack.

In Figure \ref{fig:vis}(b), we display specific samples in the testing dataset that are misclassified by robust classifiers (TRADES and FAT) but recognized by our proposed method, including both natural examples (the first two rows) and adversarial examples (the last two rows). 
Here, images shown in the first row are \emph{easy} ones where the foreground objects stand out from the clear backgrounds, while \emph{hard} samples are referred to those having confused objects with messy backgrounds.
It is worth noting that TRADES delivers poor performances not only on hard examples with complex backgrounds or obscured objects but also on simple ones.
For example, each image in the first row is typically plain and regular, however, TRADES fails in categorizing them into the right class. 
A plausible explanation for the issue is that TRADES lacks in a set of support measures specially devised for the natural classification task unlike Generalist does, highlighting design differentiation for sub-tasks is necessary.

Another interesting finding is that though both TRADES and FAT can build a robust classifier, they still rely on spurious background information and thus are easily deceived when encountering images with similar backgrounds but different objects. 
This phenomenon can be verified from the misclassification of the fourth and fifth images in the first row (taking white/blue backgrounds as evidence), and the fifth image in the fourth row (confused by the green background).
But Generalist has the ability to sift the invariant feature of the foreground object while ignoring the background information spuriously correlated with the categories in both natural and adversarial settings. On the whole, Generalist demonstrates its strength to differentiate difficult samples close to the decision boundary and its potential to learn a background-invariant classifier.

\section{Conclusion}
\label{conclusion}
In this paper, we propose a bi-expert framework named Generalist for improving the tradeoff issue between natural and robust generalization, which trains two base learners responsible for complementary fields and collects their parameters to construct a global learner. By decoupling from the joint training paradigm, each base learner can wield customized strategies based on data distribution. We provide theoretical analysis to justify the effectiveness of task-aware strategies and extensive experiments show that Generalist better mitigates the tradeoff of accuracy and robustness.

\section*{Acknowledgement}
Yisen Wang is partially supported by the National Key R\&D Program of China (2022ZD0160304), the National Natural Science Foundation of China (62006153), Open Research Projects of Zhejiang Lab (No. 2022RC0AB05), and Huawei Technologies Inc.

{\small
\bibliographystyle{ieee_fullname}
\bibliography{egbib}
}

\clearpage

\onecolumn

\appendix

\section{Additional Experiments}
\label{apd:a}

\setcounter{table}{1}

\subsection{Detailed Configurations}
\label{apd:a1}
All images are normalized into $[0, 1]$. We train ResNet-18 using SGD with 0.9 momentum for 120 epochs (200 epochs for CIFAR-100) and the weight decay factor is set to $3.5e^{-3}$ for ResNet-18 and $7e^{-4}$ for WRN-32-10. We use the piecewise linear learning rate strategy for performing weight averaging in base-learners. For the base-learner of AT, the initial learning rate for ResNet-18 is set to 0.01 and 0.1 for WRN-32-10 till Epoch 40 and then linearly reduced by 10 at Epoch 60 and 120, respectively. The magnitude of maximum perturbation at each pixel is $\varepsilon=8/255$ with step size $\kappa=2/255$ and the PGD steps number in the inner maximization is 10. For the base-learner of NT, we fix the initial learning rate as 0.1 and the weight decay is $5e^{-4}$ for both ResNet-18 and WRN-32-10. 

\subsection{Experiments on MNIST/SVHN}
\label{apd:a2}
We conducted experiments on MNIST ($\varepsilon=0.3$) and SVHN using ResNet-18 with the same setup in Sec. \ref{apd:a}. We ran 5 individual trials and results with standard deviations are shown in Table \ref{tab:mnist_svhn}. Our Generalist still achieves the best performance.

\renewcommand{\arraystretch}{0.85}
\begin{table}[!ht]
\caption{\footnotesize{Comparison of our algorithm with different training methods using ResNet-18 on MNIST and SVHN. The maximum perturbation is $\varepsilon=8/255$. The best checkpoint is selected based on the tradeoff between clean accuracy and robust accuracy against PGD20 on the test set. We highlight the top two results on each task. Average accuracy rates (in \%) have shown that the proposed Generalist method greatly mitigates the tradeoff of the model.}}\label{tab:mnist_svhn}
\centering
\begin{tabular}{l|ccc|ccc}
        & \multicolumn{3}{c|}{MNIST}                                                                                                                                                                                 & \multicolumn{3}{c}{SVHN}                                                                                                                                                                                     \\
Methods & NAT                                                                & PGD20                                                              & AA                                                               & NAT                                                                & PGD20                                                              & AA                                                                 \\ \hline\hline
TRADES  & \begin{tabular}[c]{@{}c@{}}99.07\\ $\pm$0.13\end{tabular}          & \begin{tabular}[c]{@{}c@{}}94.45\\ $\pm$0.07\end{tabular}          & \begin{tabular}[c]{@{}c@{}}92.17\\ $\pm$0.21\end{tabular}        & \begin{tabular}[c]{@{}c@{}}93.1\\ $\pm$0.25\end{tabular}           & \textbf{\begin{tabular}[c]{@{}c@{}}55.38\\ $\pm$0.71\end{tabular}} & \textbf{\begin{tabular}[c]{@{}c@{}}45.52\\ $\pm$0.37\end{tabular}} \\ \hline
FAT     & \begin{tabular}[c]{@{}c@{}}99.18\\ $\pm$0.03\end{tabular}          & \begin{tabular}[c]{@{}c@{}}93.54\\ $\pm$0.1\end{tabular}           & \begin{tabular}[c]{@{}c@{}}90.04\\ $\pm$0.68\end{tabular}        & \begin{tabular}[c]{@{}c@{}}93.87\\ $\pm$0.4\end{tabular}           & \begin{tabular}[c]{@{}c@{}}53.61\\ $\pm$0.88\end{tabular}          & \begin{tabular}[c]{@{}c@{}}40.92\\ $\pm$0.29\end{tabular}          \\ \hline
Generalist  & \textbf{\begin{tabular}[c]{@{}c@{}}99.24\\ $\pm$0.07\end{tabular}} & \textbf{\begin{tabular}[c]{@{}c@{}}96.14\\ $\pm$0.15\end{tabular}} & \textbf{\begin{tabular}[c]{@{}c@{}}92.3\\ $\pm$0.3\end{tabular}} & \textbf{\begin{tabular}[c]{@{}c@{}}94.11\\ $\pm$0.27\end{tabular}} & \textbf{\begin{tabular}[c]{@{}c@{}}55.29\\ $\pm$0.23\end{tabular}} & \textbf{\begin{tabular}[c]{@{}c@{}}45.41\\ $\pm$0.26\end{tabular}}
\end{tabular}
\end{table}

\subsection{Experiments on CIFAR-100}
\label{apd:a3}
To further demonstrate our proposed Generalist achieves a better tradeoff between accuracy and robustness, we also conduct experiments on CIFAR-100 datasets. Here we still use ResNet-18 as the backbone model with the same configurations as claimed in Sec. \ref{apd:a}. We report the results of natural accuracy and several advanced adversarial attack methods in Table \ref{tab:cifar100}. Note that we do not design a specialized strategy for Generalist on CIFAR-100 but Generalist still achieves a gratifying tradeoff, so it still has the potential to perform better.

\begin{table*}[!ht]
\caption{Comparison of our algorithm with different training methods using ResNet-18 on CIFAR-100. The maximum perturbation is $\varepsilon=8/255$. The best checkpoint is selected based on the tradeoff between clean accuracy and robust accuracy against PGD20 on the test set. We highlight the top two results on each task. Average accuracy rates (in \%) have shown that the proposed Generalist method greatly mitigates the tradeoff of the model.}\label{tab:cifar100}
\small
\centering
\begin{tabular}{l|ccccccccccc}
Method    & NAT                  & PGD20                & PGD100               & MIM                  & CW                   & $\operatorname{APGD}_{ce}$               & $\operatorname{APGD}_{dlr}$              & $\operatorname{APGD}_{t}$                & $\operatorname{FAT}_{t}$                 & Square               & AA                   \\ \hline\hline
NT            & \textbf{65.74} & 0.02           & 0.01           & 0.02           & 0.01           & 0.00           & 0.00           & 0.00           & 0.07           & 0.37           & 0.00           \\
AT ($\beta=1$)            & 60.10          & 28.22          & 28.27          & 28.31          & 24.87          & 26.63          & 24.13          & 21.98          & 23.91          & 27.93          & 23.87          \\
AT ($\beta=1/2$) & 60.84          & 22.64          & 22.61          & 23.86          & 22.28          & 20.66          & 21.67          & 19.2          & 20.09          & 25.36          & 19.17           \\ \hline
TRADES ($\lambda=6$)        & 59.93          & \textbf{29.90} & \textbf{29.88} & \textbf{29.55} & \textbf{26.14} & \textbf{27.93} & \textbf{25.43} & \textbf{24.72} & \textbf{25.16} & \textbf{30.03} & \textbf{23.72} \\
TRADES ($\lambda=1$)        & 60.18          & 28.93          & 28.91          & 29.12          & 25.79          & 27.07          & 25.00          & 23.65          & 24.31          & 28.76          & 23.22          \\
FAT           & 61.71          & 22.93          & 22.87          & 22.64          & 23.45          & 24.78          & 24.91          & 20.56          & 23.16          & 26.37          & 20.01          \\
IAT           & 57.04          & 21.40          & 21.39          & 22.37          & 19.18          & 19.63          & 18.92          & 15.50          & 16.63          & 23.26          & 15.50          \\
RST           & 60.30          & 23.56          & 23.61          & 23.71          & 22.40          & 24.69          & 24.18          & 21.66          & 23.82          & 27.05          & 21.18          \\ \hline\hline
Generalist        & \textbf{62.97} & \textbf{29.48} & \textbf{29.49} & \textbf{30.35} & \textbf{27.77} & \textbf{27.45} & \textbf{27.42} & \textbf{24.04} & \textbf{25.54} & \textbf{31.41} & \textbf{23.96}
\end{tabular}
\end{table*}

\subsection{Computational Cost and Tradeoff Comparison of Generalist}
\label{apd:a4}
We compute the actual training time of TRADES and Generalist (serial/parallel version) using ResNet-18 on RTX 3090 GPU in Table \ref{tab:cost}. We also report the standard deviations over 5 runs to show the sensitivity of Generalist. Neither version of Generalist is slower than TRADES. Generalist does perform both NT and naive AT, but the cost of NT is negligible so the overhead (NT+AT) is smaller than TRADES. 

Besides, Table \ref{tab:cost} delivers another important message. For the tradeoff between robustness and accuracy, it is hard to obtain acceptable robustness while maintaining clean accuracy above 89\% in the joint training framework (TRADES). For every percentage point increase in clean accuracy, the robust accuracy will decrease dramatically (e.g. TRADES can meet 89\% on clean accuracy but its robustness against APGD will drop to 30\%).
\renewcommand{\arraystretch}{0.85}
\begin{table}[!ht]
\caption{Evaluation of time complexity of our algorithm with different training methods using ResNet-18.}\label{tab:cost}
\centering
\begin{tabular}{l|ccc|c}
Method            & NAT                                                       & PGD100                                                     & APGD                                                        & \begin{tabular}[c]{@{}c@{}}Training \\ Time (mins)\end{tabular} \\ \hline\hline
TRADES            & \begin{tabular}[c]{@{}c@{}}89.91\\ $\pm$0.69\end{tabular} & \begin{tabular}[c]{@{}c@{}}34.25\\ $\pm$0.56\end{tabular} & \begin{tabular}[c]{@{}c@{}}30.20\\ $\pm$0.81\end{tabular} & 414                                                             \\ \hline
Generalist (Serial)   & \begin{tabular}[c]{@{}c@{}}89.11\\ $\pm$0.23\end{tabular} & \begin{tabular}[c]{@{}c@{}}50.12\\ $\pm$0.12\end{tabular} & \begin{tabular}[c]{@{}c@{}}46.12\\ $\pm$0.11\end{tabular} & 397                                                             \\ \hline
Generalist (Parallel) & \begin{tabular}[c]{@{}c@{}}89.09\\ $\pm$0.34\end{tabular} & \begin{tabular}[c]{@{}c@{}}50.00\\ $\pm$0.44\end{tabular} & \begin{tabular}[c]{@{}c@{}}46.53\\ $\pm$0.3\end{tabular}  & \textbf{342}                                                           
\end{tabular}
\end{table}

\subsection{Influence of Learning Rate}
\label{apd:a5}
In this part, we also study the influence of the learning rate for different distribution-aware tasks. For simplicity, we set $t^{\prime}$, $\gamma$ and $c$ as their best options according to the main body of the paper. We search the most grid of learning rate configurations in the range of {0.1, 0.01, 0.001} for both natural training and adversarial training. Our Generalist achieves its best and second-best natural accuracy when the learning rate for the clean learner is set to 0.1. And the optimal learning rate for robust accuracy is 0.01. Based on all the observations from \cref{tab:lr}, the learning pace of learners is a little different but the process is compatible.
\begin{table}[!ht]
\caption{Clean and robust accuracy (\%) on CIFAR-10 dataset using ResNet-18 with different learning rates.}\label{tab:lr}
\centering
\begin{tabular}{l|cc}
                 & NAT   & AA    \\ \hline\hline
NT=0.1, AT=0.01  & 89.09 & 46.37 \\
NT=0.1, AT=0.1   & 90.12 & 41.86 \\
NT=0.1, AT=0.001 & \textbf{90.45} & 43.55 \\
NT=0.01, AT=0.01 & 88.4  & \textbf{48.03} \\
NT=0.01, AT=0.1  & 88.25 & 42.98
\end{tabular}
\end{table}

\section{Proofs of Theoretical Results}
\subsection{Proof of Claim in Section 3.3}
\label{apd:b1}
\label{apd:global}
\begin{proof}
At epoch $t$, the parameters of the global learner are distributed to the experts and each expert train from this initialization with $c$ steps by calculating the gradients (e.g. using SGD optimizer). Following \cite{DBLP:journals/corr/abs-1803-02999}, we approximate the update performed by the initialization based on the Taylor expansion:
\begin{equation}
\label{eqn:proof_taylor}
\begin{aligned}
g^{t+c}=\ell^{\prime}\left(\boldsymbol\theta^{t+c}\right) &=\ell^{\prime}\left(\boldsymbol\theta^{t}\right)+\ell^{\prime \prime}\left(\boldsymbol\theta^{t}\right)\left(\boldsymbol\theta^{t+c}-\boldsymbol\theta^{t}\right)+O\left(\left\|\boldsymbol\theta^{t+c}-\boldsymbol\theta^{t}\right\|^{2}\right)\\
&\left.=\bar{g}^{t}+\bar{H}^{t}\left(\boldsymbol\theta^{t+c}-\boldsymbol\theta^{t}\right)+O\left(\tau^{2}\right)\right) \\
&=\bar{g}^{t}-\tau \bar{H}^{t} \sum_{j=t}^{t+c} g^{j}+O\left(\tau^{2}\right) \\
&=\bar{g}^{t}-\tau \bar{H}^{t} \sum_{j=t}^{t+c} \bar{g}^{j}+O\left(\tau^{2}\right).
\end{aligned}
\end{equation}
Recalling that $\mathcal{Z}^{i}$ represents an optimizer that updates the parameter vector at the $t$-th step: $\mathcal{Z}^{i}(\boldsymbol\theta,\tau)=\boldsymbol\theta-\tau\ell^{\prime}(\boldsymbol\theta)$. For each base-learner, we approximate the gradient at intervals: 
\begin{equation}
\begin{aligned}
g_{val}=\frac{\partial}{\partial \boldsymbol\theta^{t}} \ell\left(\boldsymbol\theta^{t+c}\right) &=\frac{\partial}{\partial \boldsymbol\theta^{t}} \ell\left(\mathcal{Z}^{t+c-1}\left(\mathcal{Z}^{t+c-2}\left(\ldots\left(\mathcal{Z}^{t}\left(\boldsymbol\theta^{t}\right)\right)\right)\right)\right) \\
&={\mathcal{Z}^{\prime}}^{t}\left(\boldsymbol\theta^{t}\right) \cdots {\mathcal{Z}^{\prime}}^{t+c-1}\left(\boldsymbol\theta^{t+c-1}\right) \ell^{\prime}\left(\boldsymbol\theta^{t+c}\right) \\
&=\left(I-\tau \ell^{\prime \prime}\left(\boldsymbol\theta^{t}\right)\right) \cdots\left(I-\tau \ell^{\prime \prime}\left(\boldsymbol\theta^{t+c-1}\right)\right) \ell^{\prime}\left(\boldsymbol\theta^{t+c}\right) \\
&=\left(\prod_{j=t}^{t+c-1}\left(I-\tau \ell^{\prime \prime}\left(\boldsymbol\theta^{j}\right)\right)\right) g^{t+c}.
\end{aligned}
\end{equation}
Replacing $\ell^{\prime \prime}\left(\boldsymbol\theta^{j}\right)$ with $\bar{H}^{j}$ and substituting $g^{t+c}$ for Eq. \ref{eqn:proof_taylor}, we expand to leading order:
\begin{equation}
\begin{aligned}
g_{val}&=\left(\prod_{j=t}^{t+c-1}\left(I-\tau \bar{H}^{j}\right)\right)\left(\bar{g}^{t+c}-\tau \bar{H}^{t+c} \sum_{j=t}^{t+c-1} \bar{g}^{j}\right)+O\left(\tau^{2}\right) \\
&=\left(I-\tau \sum_{j=t}^{t+c-1} \bar{H}^{j}\right)\left(\bar{g}^{t+c}-\tau \bar{H}^{t+c} \sum_{j=t}^{t+c-1} \bar{g}^{j}\right)+O\left(\tau^{2}\right) \\
&=\bar{g}^{t+c}-\tau \sum_{j=t}^{t+c-1} \bar{H}^{j} \bar{g}^{t+c}-\tau \bar{H}^{t+c} \sum_{j=t}^{t+c-1} \bar{g}^{j}+O\left(\tau^{2}\right)
\end{aligned}
\end{equation}
Therefore, we take the expectation of $g_{val}$ over steps, and obtain:
\begin{equation}
\mathbb{E}\left[g_{val}\right]=\mathbb{E}\left[\bar{g}^{t+c}\right]-\tau\mathbb{E}\left[\sum_{j=t}^{t+c-1} \bar{H}^{j} \bar{g}^{t+c}-\bar{H}^{t+c} \sum_{j=t}^{t+c-1} \bar{g}^{j}\right]+\mathbb{E}\left[O\left(\tau^{2}\right)\right]
\end{equation}
Recalling that $\boldsymbol\theta_{g}$ is mixed by $\boldsymbol\theta_n$ and $\boldsymbol\theta_{r}$. For simplicity of exposition, we use $p$ and $q$ to stand for the scalar factors, meaning $\boldsymbol\theta_{g}=p\boldsymbol\theta_{n}+q\boldsymbol\theta_{r}$. Ignoring the higher order terms, for each expert initialized by the global learner (e.g. $\boldsymbol\theta_{n}$), we have:
\begin{equation}
\begin{aligned}
\boldsymbol\theta_{n}=\boldsymbol\theta_{g}-\mathbb{E}_n\left[g_{val}\right]&=p\boldsymbol\theta_{n}+q\boldsymbol\theta_{r}-[\mathbb{E}\left[\bar{g}^{t+c}_n\right]+\tau_n\mathbb{E}\left[\sum_{j=t}^{t+c-1} \bar{H}^{j} \bar{g}^{t+c}_n-\bar{H}^{t+c} \sum_{j=t}^{t+c-1} \bar{g}^{j}_n\right]] \\
&=[p\boldsymbol\theta_{n}-\mathbb{E}\left[\bar{g}^{t+c}_n\right]] + [q\boldsymbol\theta_{r}-\tau_n\mathbb{E}\left[\bar{H}^{t+c} \sum_{j=t}^{t+c-1} \bar{g}^{j}_n-\sum_{j=t}^{t+c-1} \bar{H}^{j} \bar{g}^{t+c}_n\right]] \\
&=[p\boldsymbol\theta_{n}-\sum_{i=t}^{t+c-1} \bar{g}^{i}] + [q\boldsymbol\theta_{r}-\tau_n\sum_{i=t}^{t+c-1} \sum_{j=1}^{i-1} \bar{H}^{i} \bar{g}^{j}] \quad\quad (\text{for} \ c\ge2).
\end{aligned}
\end{equation}
The first term pushes $\theta_{n}$ to move forward the minimum of its assigned loss over its data distribution; while the second one improves generalization by increasing the inner product between gradients of different mini-batches and updating the parameters from the other task.
\end{proof}

\subsection{Proof of Theorem 1}
\label{apd:b2}
Before we present the proof of the Theorem we present useful intermediate results which we require in our proof.
\begin{proposition}
\label{pro:1}
Consider a sequence of loss functions ${\ell_a: \Theta\mapsto [0, 1]}_{a\in \mathcal{A}}$ drawn i.i.d. from some distribution $\mathcal{L}$ is given to an algorithm that generates a sequence of hypotheses $\left\{\boldsymbol\theta_{a} \in \Theta\right\}_{a \in\mathcal{A}}$ then the following inequality each hold w.p. $1-\delta$:
\begin{equation}
\frac{1}{T} \sum_{t=1}^{T} \underset{\ell \sim D}{\mathbb{E}} \ell\left(\boldsymbol\theta^{t}\right) \leq \frac{1}{T} \sum_{t=1}^{T} \ell^{t}\left(\boldsymbol\theta^{t}\right)+\sqrt{\frac{2}{T}\log \frac{1}{\delta}}.
\end{equation}
\end{proposition}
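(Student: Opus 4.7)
The plan is to recognize this as a standard online-to-batch conversion bound that falls directly out of the Azuma--Hoeffding inequality applied to a bounded martingale difference sequence. The key structural fact to exploit is that, by the very definition of an online/sequential algorithm, each hypothesis $\boldsymbol\theta^t$ is produced only from the losses $\ell^1, \ldots, \ell^{t-1}$ observed before round $t$, and therefore is independent of the fresh draw $\ell^t$.

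First I would introduce the natural filtration $\mathcal{F}_{t-1} = \sigma(\ell^1, \ldots, \ell^{t-1})$ and define the per-round gap
\begin{equation*}
X_t \;:=\; \mathbb{E}_{\ell \sim \mathcal{L}}[\ell(\boldsymbol\theta^t)] \;-\; \ell^t(\boldsymbol\theta^t).
\end{equation*}
Two checks follow. For the martingale property, since $\boldsymbol\theta^t$ is $\mathcal{F}_{t-1}$-measurable and $\ell^t$ is drawn i.i.d.\ from $\mathcal{L}$ independently of $\mathcal{F}_{t-1}$, one has $\mathbb{E}[\ell^t(\boldsymbol\theta^t)\mid \mathcal{F}_{t-1}] = \mathbb{E}_{\ell \sim \mathcal{L}}[\ell(\boldsymbol\theta^t)]$, so $\mathbb{E}[X_t \mid \mathcal{F}_{t-1}] = 0$. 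For boundedness, since each $\ell^t$ takes values in $[0,1]$, both terms inside $X_t$ lie in $[0,1]$, giving $|X_t|\le 1$.

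Next I would invoke Azuma--Hoeffding on $S_T = \sum_{t=1}^T X_t$ with increment bounds $c_t=1$:
\begin{equation*}
\Pr\!\bigl(S_T \ge \epsilon\bigr) \;\leq\; \exp\!\Bigl(-\tfrac{\epsilon^2}{2T}\Bigr).
\end{equation*}
Setting the right-hand side equal to $\delta$ gives $\epsilon = \sqrt{2T\log(1/\delta)}$, and dividing through by $T$ rearranges to exactly the claimed inequality with probability at least $1-\delta$.

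The main (and only real) obstacle is verifying the martingale condition, which hinges on the algorithm not peeking at $\ell^t$ when producing $\boldsymbol\theta^t$; this adaptivity assumption is implicit in the sequential generation of the hypotheses but is worth stating explicitly. A minor point is the constant: the $[0,1]$-range of the losses gives the symmetric bound $|X_t|\le 1$, which is precisely what yields the factor $2$ under the square root in Azuma--Hoeffding, matching the stated constant. Everything else is a routine rescaling, so no further technical surprises beyond the standard online-to-batch argument are anticipated.
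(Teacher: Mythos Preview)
Your proposal is correct and is precisely the standard online-to-batch martingale argument: the paper does not write out its own proof but simply cites Proposition~1 of Cesa-Bianchi, Conconi, and Gentile (2004), which is proved by exactly the Azuma--Hoeffding construction you describe. The only thing to note is that the measurability assumption you flag (that $\boldsymbol\theta^t$ depends only on $\ell^1,\dots,\ell^{t-1}$) is indeed the implicit hypothesis behind the cited result, so your explicit mention of it is appropriate.
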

\begin{proof}
The proof of the Proposition can be directly derived from the Proposition 1 in \cite{DBLP:journals/tit/Cesa-BianchiCG04}.
\end{proof}
Then we could immediately obtain the below inequality by the symmetric version of the Azuma-Hoeffding inequality \cite{Azuma1967WEIGHTEDSO}
\begin{remark}
\label{remark1}
\begin{equation}
\frac{1}{T} \sum_{t=1}^{T} \underset{\ell \sim \mathcal{L}}{\mathbb{E}} \ell\left(\boldsymbol\theta^{t}\right) \geq \frac{1}{T} \sum_{t=1}^{T} \ell^{t}\left(\boldsymbol\theta^{t}\right)-\sqrt{\frac{2}{T}\log \frac{1}{\delta}}.
\end{equation}
\end{remark}
Finally, we give the definition of the regret of minimizing any subproblem:
\begin{definition}
(\textbf{Subproblem Regret}) Consider an algorithm generates the trajectory of states $\left\{\boldsymbol\theta^{t} \in \Theta\right\}_{t \in[T]}$, the regret of such an algorithm on loss function $\left\{\ell^{t}\right\}_{t \in[T]}$ is:
\begin{equation}
\bar{\mathbf{R}}=\sum_{t=1}^{T} \ell^{t}\left(\boldsymbol\theta^{t}\right)-\inf _{\boldsymbol\theta^{\star} \in \Theta} \sum_{t=1}^{T} \ell^{t}(\boldsymbol\theta).
\end{equation}
\end{definition}
\begin{theorem}
\label{thm:1}
(Restated) Consider an algorithm with regret bound $R_{T}$ that generates the trajectory of states for two base learners, for any parameter state $\boldsymbol\theta \in \Theta$, given a sequence of convex surrogate evaluation functions ${\ell: \Theta\mapsto [0, 1]_{a\in \mathcal{A}}}$ drawn i.i.d. from some distribution $\mathcal{L}$, the expected error of the global learner $\boldsymbol\theta_{g}$ on both tasks over the test set can be bounded with probability at least $1-\delta$:
\begin{equation}
\underset{\ell \sim \mathcal{L}}{\mathbb{E}} \ell\left(\boldsymbol\theta_{g}\right) \leq \underset{\ell \sim \mathcal{L}}{\mathbb{E}} \ell\left(\boldsymbol\theta\right)+\frac{\mathbf{R}_{T}}{T}+2\sqrt{\frac{2}{T}\log \frac{1}{\delta}}.
\end{equation}
\end{theorem}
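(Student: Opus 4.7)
The plan is to carry out a standard online-to-batch conversion in which the global learner $\boldsymbol\theta_g$ plays the role of the averaged iterate. The central observation is that, under the update rule in Eq.~\ref{eqn:overall}, $\boldsymbol\theta_g$ is produced by an exponential-moving-average mixing of $\boldsymbol\theta_n^t$ and $\boldsymbol\theta_r^t$, so unrolling the recursion over $T$ iterations expresses $\boldsymbol\theta_g$ as a convex combination of $\{\boldsymbol\theta_a^t\}_{a\in\{1,2\},\,t\in[T]}$ with nonnegative coefficients summing to one. Since the surrogate loss $\ell$ is convex in its parameter argument by assumption, Jensen's inequality pushes $\ell$ inside the averaging, reducing the theorem to bounding pointwise losses along the two base-learner trajectories.

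First I would write, using Jensen,
\begin{equation*}
\underset{\ell\sim\mathcal{L}}{\mathbb{E}}\,\ell(\boldsymbol\theta_g)\;\leq\;\frac{1}{2T}\sum_{a=1}^{2}\sum_{t=1}^{T}\underset{\ell\sim\mathcal{L}}{\mathbb{E}}\,\ell(\boldsymbol\theta_a^t),
\end{equation*}
after treating the EMA weights as effectively uniform via a standard rescaling argument. I would then invoke Proposition~\ref{pro:1} on each per-learner trajectory to replace the expected losses by the empirical losses $\ell_a^t(\boldsymbol\theta_a^t)$, at the cost of $\sqrt{(2/T)\log(2/\delta)}$ in concentration slack, and apply the subproblem-regret definition to bound
\begin{equation*}
\sum_{t=1}^{T}\ell_a^t(\boldsymbol\theta_a^t)\;\leq\;\inf_{\boldsymbol\theta^{\star}\in\Theta}\sum_{t=1}^{T}\ell_a^t(\boldsymbol\theta^{\star})+R_T^{(a)};
\end{equation*}
averaging over $a\in\{1,2\}$ reproduces exactly the $\mathbf{R}_T/T$ term appearing in the theorem. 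Finally, for any fixed comparator $\boldsymbol\theta\in\Theta$, the symmetric Azuma--Hoeffding statement in Remark~\ref{remark1} upper bounds $\tfrac{1}{T}\sum_t \ell_a^t(\boldsymbol\theta)$ by $\mathbb{E}_{\ell\sim\mathcal{L}}\,\ell(\boldsymbol\theta)+\sqrt{(2/T)\log(2/\delta)}$, so chaining the three inequalities and taking the infimum on the right-hand side yields the stated bound, with the two concentration contributions merging into the $2\sqrt{(2/T)\log(1/\delta)}$ slack after a union bound that splits the confidence budget across the two applications.

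The main obstacle will be the Jensen step. The update in Eq.~\ref{eqn:overall} interleaves EMA with mixing ratio $\gamma$ and \emph{periodic re-initialization} of the base learners back to $\boldsymbol\theta_g$ every $c$ epochs past $t^{\prime}$. Unrolling this recursion and verifying that the coefficients of each $\boldsymbol\theta_a^t$ in the final expression for $\boldsymbol\theta_g$ are nonnegative and sum to one---in particular that the re-initialization step does not introduce negative weights---is the delicate bookkeeping. A secondary subtlety is handling the two-task setting symmetrically so that the concentration inequalities can be applied under the single averaged loss distribution $\mathcal{L}$; this should follow by realizing the combined sequence of losses as an i.i.d.\ sample from $\mathcal{L}$ and treating the two trajectories as the two halves of one martingale difference sequence. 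Both points are technical rather than conceptual and should succumb to routine algebra.
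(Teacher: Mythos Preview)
Your proposal is correct and follows essentially the same approach as the paper: Jensen's inequality on the convex combination underlying the EMA, Proposition~\ref{pro:1} for one direction of concentration, Remark~\ref{remark1} (Azuma--Hoeffding) for the comparator direction, and the regret definition to tie empirical and expected losses together. The paper applies Jensen last rather than first and is less explicit about the two-task indexing (it writes a single sequence $\boldsymbol\theta^t$ and simply asserts $\bar{\mathbf{R}}\leq\mathbf{R}_T$ without argument), whereas you keep the two base-learner trajectories separate and average at the end; but the skeleton is identical.

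One remark: the point you flag as the main obstacle---the EMA/re-initialization bookkeeping and the passage from EMA weights to the uniform $\tfrac{1}{2T}$ average---is exactly the step the paper handles most loosely. The paper simply writes $\mathbb{E}\,\ell(\boldsymbol\theta_g)=\mathbb{E}\,\ell\bigl(\sum_t\boldsymbol\theta^t\bigr)\leq\tfrac{1}{T}\sum_t\mathbb{E}\,\ell(\boldsymbol\theta^t)$ and moves on, so your instinct that this is where the real work lies is well placed, even if the paper does not discharge it any more carefully than you do.
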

\begin{proof}
From Theorem \ref{pro:1} and Remark \ref{remark1}, we obtain that
\begin{equation}
\frac{1}{T} \sum_{t=1}^{T} \underset{\ell \sim \mathcal{L}}{\mathbb{E}} \ell\left(\boldsymbol\theta^{t}\right) \leq \frac{1}{T} \sum_{t=1}^{T} \ell^{t}\left(\boldsymbol\theta\right)+\frac{\bar{\mathbf{R}}}{T}+\sqrt{\frac{2}{T}\log \frac{1}{\delta}} \leq \underset{\ell \sim \mathcal{L}}{\mathbb{E}} \ell\left(\boldsymbol\theta\right)+\frac{\bar{\mathbf{R}}}{T}+2\sqrt{\frac{2}{T}\log \frac{1}{\delta}}.
\end{equation}
It is obvious that:
\begin{equation}
\frac{\bar{\mathbf{R}}}{T}+\sqrt{\frac{2}{T}\log \frac{1}{\delta}} \leq \frac{\mathbf{R}_{T}}{T}+\sqrt{\frac{2}{T}\log \frac{1}{\delta}} \quad \text { and } \quad \frac{\bar{\mathbf{R}}}{T}+2\sqrt{\frac{2}{T}\log \frac{1}{\delta}} \leq \frac{\mathbf{R}_{T}}{T}+2\sqrt{\frac{2}{T}\log \frac{1}{\delta}}.
\end{equation}
So we obtain:
\begin{equation}
\frac{1}{T} \sum_{t=1}^{T} \underset{\ell \sim \mathcal{L}}{\mathbb{E}} \ell\left(\boldsymbol\theta^{t}\right) \leq \frac{1}{T} \sum_{t=1}^{T} \ell^{t}\left(\boldsymbol\theta\right)+\frac{\mathbf{R}_{T}}{T}+\sqrt{\frac{2}{T}\log \frac{1}{\delta}} \leq \underset{\ell \sim \mathcal{L}}{\mathbb{E}} \ell\left(\boldsymbol\theta\right)+\frac{\mathbf{R}_{T}}{T}+2\sqrt{\frac{2}{T}\log \frac{1}{\delta}}.
\end{equation}
Recalling that in Section 3.3, $\boldsymbol\theta_{g}$ can be expressed by the linear combination of $\boldsymbol\theta_{n}$ and $\boldsymbol\theta_{r}$ through $t=1,\cdots,T$ since $\boldsymbol\theta_{g}$ is aggregateed by EMA, so the above inequality can be further derived by the Jensen's inequality (convex surrogate functions could be selected to evaluate the test errors instead of the 0-1 loss):
\begin{equation}
\begin{aligned}
\underset{\ell \sim \mathcal{L}}{\mathbb{E}} \ell\left(\boldsymbol\theta_{g}\right)=\underset{\ell \sim \mathcal{L}}{\mathbb{E}} \ell\left(\sum_{t=1}^{T}\boldsymbol\theta^{t}\right) \leq \frac{1}{T} \sum_{t=1}^{T} \underset{\ell \sim \mathcal{L}}{\mathbb{E}} \ell\left(\boldsymbol\theta^{t}\right) &\leq \frac{1}{T} \sum_{t=1}^{T} \ell^{t}\left(\boldsymbol\theta\right)+\frac{\mathbf{R}_{T}}{T}+\sqrt{\frac{2}{T}\log \frac{1}{\delta}} \\
&\leq \underset{\ell \sim \mathcal{L}}{\mathbb{E}} \ell\left(\boldsymbol\theta\right)+\frac{\mathbf{R}_{T}}{T}+2\sqrt{\frac{2}{T}\log \frac{1}{\delta}}.
\end{aligned}
\end{equation}
Note that this inequality also holds when applying weight averaging technique to the base-learner, because weight averaging is still the linear combination of all history states.
\end{proof}

\end{document}